
\documentclass{article}

\usepackage{microtype}
\usepackage{graphicx}
\usepackage{subfigure}
\usepackage{booktabs} 
\usepackage{makecell}
\usepackage{tikz}
\usepackage{graphicx}
\usepackage{booktabs}
\usepackage{multirow}
\usepackage{array}
\usepackage{xcolor}

\usepackage{algorithm}
\usepackage{algpseudocode}

\usepackage{hyperref}



\usepackage[accepted]{icml2025}

\usepackage{amsmath}
\usepackage{amssymb}
\usepackage{mathtools}
\usepackage{amsthm}
\usepackage{tcolorbox} 
\usepackage{xcolor}    
\usepackage{lipsum}  

\usepackage[capitalize,noabbrev]{cleveref}

\theoremstyle{plain}
\newtheorem{theorem}{Theorem}[section]

\newtheorem{lemma}[theorem]{Lemma}

\theoremstyle{definition}
\newtheorem{definition}[theorem]{Definition}
\newtheorem{assumption}[theorem]{Assumption}
\theoremstyle{remark}

\newcommand{\eat}[1]{}
\newcommand{\eg}{{\em e.g., }}     
\newcommand{\ie}{{\em i.e., }}      
\newcommand{\cm}{\color{red}}
\newcommand{\cmth}{\color{blue}}

\usepackage[textsize=tiny]{todonotes}

\icmltitlerunning{Rethinking Chain-of-Thought from the Perspective of Self-Training}

\begin{document}

\twocolumn[
\icmltitle{Rethinking Chain-of-Thought from the Perspective of Self-Training}



\icmlsetsymbol{equal}{*}

\begin{icmlauthorlist}
\icmlauthor{Zongqian Wu}{equal,uestc}
\icmlauthor{Baoduo Xu}{equal,uestc}
\icmlauthor{Ruochen Cui}{uestc}
\icmlauthor{Mengmeng Zhan}{uestc}
\icmlauthor{Xiaofeng Zhu}{uestc,hnu}
\icmlauthor{Lei Feng}{su}
\end{icmlauthorlist}

\icmlaffiliation{uestc}{School of Computer Science and Engineering, University of Electronic Science and Technology of China, Chengdu, China}
\icmlaffiliation{su}{School of Computer Science and Engineering, Southeast University, Nanjing, China}
\icmlaffiliation{hnu}{School of Computer Science and Technology, Hainan University, Haikou, China}

\icmlcorrespondingauthor{Xiaofeng Zhu}{seanzhuxf@gmail.com}
\icmlcorrespondingauthor{Lei Feng}{fenglei@seu.edu.cn}

\icmlkeywords{Machine Learning, ICML}

\vskip 0.3in
]



\printAffiliationsAndNotice{\icmlEqualContribution} 

\begin{abstract}
Chain-of-thought (CoT) reasoning has emerged as an effective approach for activating latent capabilities in LLMs. Interestingly, we observe that both CoT reasoning and self-training share the core objective: iteratively leveraging model-generated information to progressively reduce prediction uncertainty. Building on this insight, we propose a novel CoT framework to improve reasoning performance. Our framework integrates two key components: (i) a task-specific prompt module that optimizes the initial reasoning process, and (ii) an adaptive reasoning iteration module that dynamically refines the reasoning process and addresses the limitations of previous CoT approaches, \ie over-reasoning and high similarity between consecutive reasoning iterations.
Extensive experiments show that the proposed method achieves significant advantages in both performance and computational efficiency.
Our code is available at: https://github.com/zongqianwu/ST-COT.
\end{abstract}

\section{Introduction}

Chain-of-thought (CoT) reasoning has attracted significant attention in recent years due to its capacity to unlock the latent potential of large language models (LLMs) \cite{wei2022chain}. By requiring LLMs to explicitly outline intermediate reasoning processes before generating final outputs, CoT effectively improves the reliability of inferences, particularly when tackling complex reasoning tasks.

Previous CoT methods in LLMs can be divided into two categories, \ie zero-shot CoT \cite{kojima2022large} and few-shot CoT \cite{wei2022chain}. Zero-shot CoT methods rely on prompts (\eg ``Let's think step by step") to guide LLMs to generate intermediate reasoning processes relevant to the given question, thereby facilitating logical inference. In contrast, few-shot CoT methods provide some examples that include intermediate reasoning processes from the dataset, allowing LLMs to reference these examples during testing to construct reasoning processes. Both zero-shot and few-shot CoT methods leverage the generative capabilities of LLMs to augment question-relevant information, which effectively improves the reliability of inferences. Moreover, this process of information augmentation can be applied iteratively, with deeper iterations enabling LLMs to tackle more complex reasoning tasks \cite{zhong2024evaluation}.

\begin{figure}
    \begin{center}
    \includegraphics[width=1\linewidth]{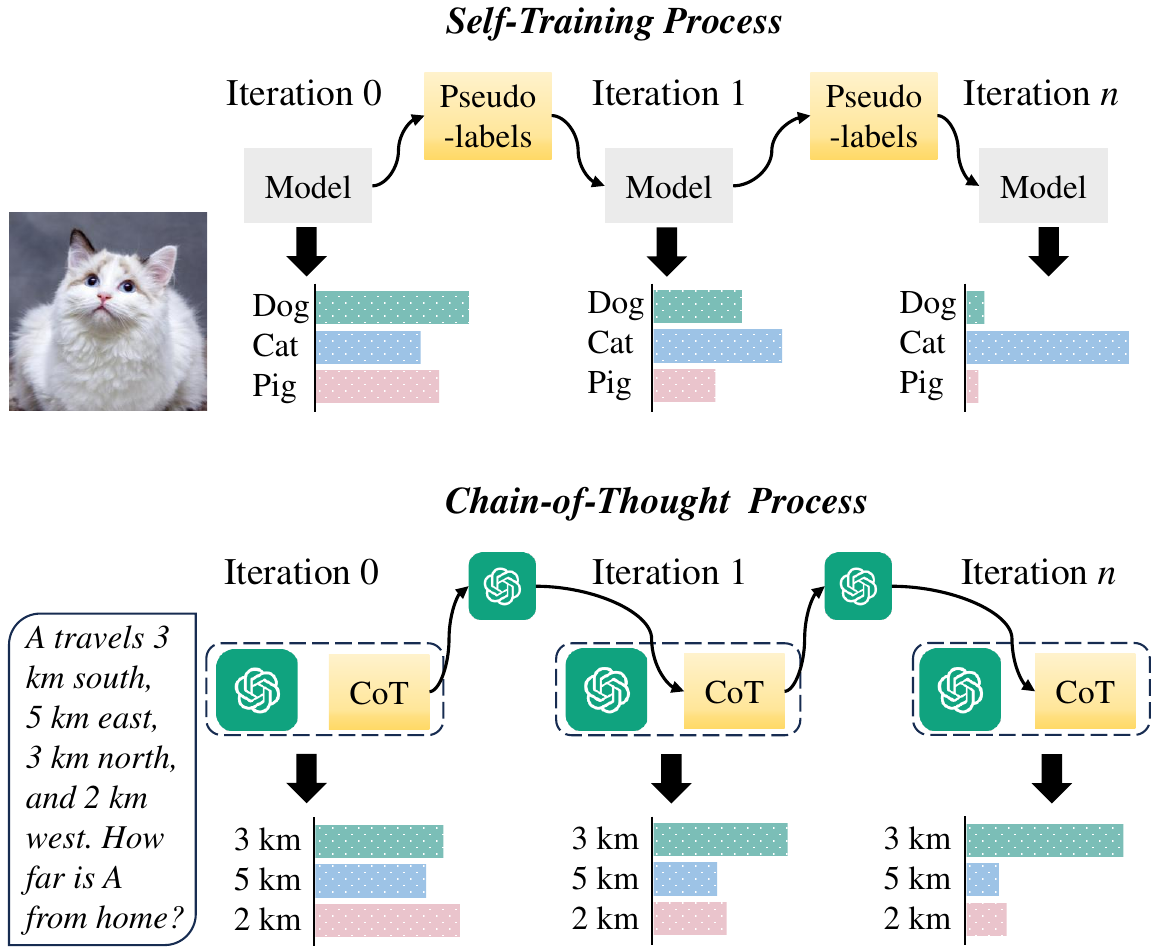}
    \end{center}
    \vspace{-2pt}
    \caption{
    Both self-training and CoT reasoning \textbf{iteratively} leverage \textbf{model-generated information} (pseudo-labels or reasoning processes) to gradually reduce the uncertainty of predictions.
    }
    \label{fig:ST-CoT-framework}
    \vspace{-4pt}
\end{figure}

Interestingly, we observe that CoT methods share many conceptual similarities with self-training (a well-established semi-supervised framework), regarding leveraging iteratively model-generated information to enhance task performance (as shown in Figure \ref{fig:ST-CoT-framework}).
Concretely, in self-training, pseudo-labels are iteratively generated for unlabeled data and used to retrain the model, thereby progressively enhancing the generalization capabilities of the model. 
Inspired by this parallel, this paper aims to improve CoT reasoning performance from the perspective of self-training. 

To this end, we conduct a theoretical analysis of entropy variation in self-training, which demonstrates that the overall uncertainty of predicted samples exhibits a decreasing trend. Furthermore, our experimental results validate that CoT reasoning follows a similar principle, as it iteratively leverages intermediate reasoning processes to gradually reduce the uncertainty on prediction questions.
Based on the insight that CoT reasoning is an uncertainty minimization process, we propose a novel CoT framework to improve reasoning performance, which comprises two main components, \ie \textbf{task-specific prompt} and \textbf{adaptive reasoning iteration}.
Specifically, the task-specific prompt module is designed to search for optimal prompts with the minimum uncertainty. Unlike generic prompts (\eg ``Let's think step by step"), our tailored prompt effectively guides LLMs to generate initial reasoning processes aligned with the intrinsic characteristics of the task, significantly reducing the
number of iterations required for LLMs to obtain correct answers.

After establishing initial CoT reasoning processes, further iterative refinement can help improve the reasoning process. A straightforward method involves integrating the question, reasoning processes, and output into a new input and reusing the prompt for another reasoning round. However, this intuitive method encounters two key challenges: (i) the correct predictions in earlier iterations may turn incorrect after multiple rounds, a phenomenon we term over-reasoning, and (ii) the reasoning generated in new iterations often closely resembles the previous reasoning.
To address these challenges, our adaptive reasoning iteration module is implemented to assess the uncertainty of prediction questions. When the uncertainty is low, the current prediction is adopted as the final output, effectively mitigating challenge (i). However, if uncertainty remains high, the reasoning process continues into the next iteration. In these subsequent iterations, we introduce a new prompt and employ reasoning similarity metrics (\eg the Jaccard index) to guide LLMs in exploring alternative reasoning pathways. By fostering greater diversity across reasoning iterations, we address the challenge (ii), thereby enhancing the ability of LLMs to tackle complex reasoning tasks effectively.

Motivated by the conceptual connection between self-training and chain-of-thought reasoning, our key contributions can be summarized as follows:
\begin{itemize}
\item Through a combination of theoretical analysis and experimental validation, we reveal that both self-training and CoT reasoning share the core objective of iteratively leveraging model-generated information to gradually reduce the uncertainty of predictions (\ie entropy minimization), thereby turning some early incorrect predictions into correct ones.

\item We design a task-specific prompt module to search for optimal prompts that generate high-quality initial reasoning processes, thereby significantly reducing the number of iterations required for LLMs to obtain correct answers in CoT reasoning.

\item we propose an adaptive reasoning iteration module to dynamically refine the CoT reasoning process and address issues of over-reasoning and high similarity between consecutive reasoning iterations.
\end{itemize}

\section{Related Work}

\subsection{Self-Training}

Self-training \cite{scudder1965adaptive} has emerged as a widely used approach in semi-supervised learning, aiming to expand labeled datasets through pseudo-labeling techniques \cite{yang2022survey}. An effective pseudo-labeling strategy ensures that the pseudo-labels assigned to unlabeled data align well with the distribution of labeled samples. Recent research has focused on two main directions: selecting high-confidence pseudo-labels and optimizing multi-classifier training frameworks \cite{amini2024self}. For pseudo-label selection, some methods directly incorporate unlabeled samples with high prediction confidence in iterative training \cite{lee2013pseudo, zou2018unsupervised}, while others explore more robust strategies such as majority voting \cite{bartlett1998boosting}, entropy minimization \cite{grandvalet2004semi}, uncertainty estimation \cite{mukherjee2020uncertainty}, noise injection \cite{miyato2018virtual}, confidence regularization \cite{zou2019confidence}, and curriculum-based pseudo-labeling \cite{zhang2021flexmatch}. However, the presence of incorrect pseudo-labels can mislead model optimization, motivating the development of debiasing techniques and other corrective mechanisms to mitigate their negative impact \cite{chen2022debiased}.


\subsection{Chain-of-Thought Reasoning}

Trustworthy reasoning is essential for large language models (LLMs), and chain-of-thought (CoT) prompting has emerged as a key technique to improve reliability and transparency by guiding LLMs to generate intermediate reasoning steps \cite{chu2024navigate}. Early CoT methods relied on implicit prompting to help LLMs solve complex tasks such as arithmetic and commonsense reasoning by decomposing them into simpler subproblems \cite{chia2023contrastive}, demonstrating strong generalization from limited examples and carefully designed prompts \cite{sun2023survey}. Recent advances in CoT focus on improving flexibility by generating diverse reasoning paths, particularly for unfamiliar tasks \cite{kojima2022large, ling2024deductive}. Methods like PSPrompting \cite{wang2023plan} and Concise-CoT \cite{nayab2024concise} selectively activate pretrained knowledge according to task-specific needs, while approaches such as VerifyCoT \cite{ling2024deductive} enhance trustworthiness by generating additional question-answer pairs for validation.

\section{Understanding Uncertainty in Self-Training and Chain-of-Thought Reasoning}
\label{co}

This section investigates the uncertainty in self-training and CoT reasoning by analyzing variations in information entropy and semantic entropy. We observe that while both methods exhibit a general trend of entropy reduction, this pattern does not hold for every data point. Performance improvements are primarily driven by data points where entropy decreases, enabling a transition from incorrect to correct predictions. Specifically, Section \ref{iems} presents a theoretical analysis of entropy variations in self-training. Building on these insights, Section \ref{semcot} extends the discussion to CoT reasoning, deriving key conclusions that help further enhance the performance of CoT reasoning.

\subsection{Information Entropy Variation in Self-Training}
\label{iems}

The primary objective of self-training is to reduce prediction uncertainty by leveraging pseudo-labels generated by the model, where the uncertainty can be quantified using information entropy. As empirically shown in Figure \ref{fig:ent_change_of_ST} in Appendix, the average entropy of predictions across samples progressively declines throughout the iterative training process. This reduction facilitates some samples being corrected from initial mispredictions to accurate predictions.

However, not all samples exhibit the expected reduction in information entropy. This deviation can be attributed to incorrect annotations within pseudo-labels, which may misdirect the model's training direction.
To provide deeper insights into the entropy variation of self-training, we conducted a theoretical analysis using a Gaussian mixture model. Specifically, an initial classifier with a small yet constant error was constructed and iteratively updated with pseudo-labels based on model predictions. Through this approach, we examined the classifier's progression from its initial state toward the optimal state, capturing the underlying mechanism of entropy variation. 
Based on the conclusions of \cite{frei2022self} regarding the sample complexity of unlabeled samples in self-training, we discover that the intermediate classifier update process is a rotation of the initial classifier towards the Bayes optimal classifier. 
This conclusion is stated in the following lemma.

\begin{lemma}\label{lemma:theta_t_change}
    Suppose $(x,y)\sim \mathcal{D}$ where $\mathcal{D}$ is a Gaussian mixture models in $\mathbb{R}^d\times\{\pm 1\}$ with mean $\mu$ satisfying $\left\Vert{\mu}\right\Vert=\Theta(1)$, i.e., $y\sim\mathrm{Unif}(\{\pm 1\})$ and $x|y\sim \mathcal{N}(y\mu,I)$. Let $\ell(z)=\log(1+\exp(-z))$,
    and assume $\sigma\ge \max(1,\left\Vert{\mu}\right\Vert)$. Assume we can access a initial classifier $\beta_{\mathrm{init}}$ which satisfies $\Pr_{(x,y)\sim \mathcal{D}}[y\ne\mathrm{sgn}(\beta_{\mathrm{init}}^Tx)]=O(1)$. Let $\varepsilon,\delta\in (0,1)$, and assume that $B=\tilde{\Omega}(\varepsilon^{-1})$, $T=\tilde{\Omega}(d\varepsilon^{-1})$, $\eta=\tilde{\Theta}(d^{-1}\varepsilon)$, suppose $\theta_t$ is the angle between $\beta_t$ and $\mu$, then by running algorithm \ref{alg:selftraining_with_batch} with step size $\eta$ and batch size $B$, when $t<T-1$, $\theta_t\ge \theta_{t+1}$ holds with probability at least $1-\delta$, and with probability at least $1-\delta$, $\theta_{T-1}\le O(\varepsilon)$.
\end{lemma}

Building on Lemma \ref{lemma:theta_t_change}, assuming that pseudo-labels follow a Bernoulli distribution, and leveraging the relationship between the dot product of the classifier and the samples with $\theta_t$, we can readily derive the entropy variations for different samples and identify the regions to which different types of samples belong, as stated in the following theorem.

\begin{theorem}
    \label{thm:Ent_min_of_self_training}
    Under the assumptions of Lemma \ref{lemma:theta_t_change}, let $d=2$ and suppose $\hat{y}^{(t)}|x \sim \mathrm{Ber}(\vartheta(\beta_t^T x))$ is the pseudo-label of $x$, where $\vartheta(z)=\frac{1}{1+\mathrm{e}^{-z}}$. Define $l(\alpha)$ as the line $x^T \alpha^\bot = 0$, where $\alpha^\bot$ is perpendicular to $\alpha$. Let $A(\alpha_1, \alpha_2)$ denote the region swept by $l(\alpha_1)$ rotating to $l(\alpha_2)$ along the trajectory of $\beta_{\mathrm{init}}$ towards $\mu$ during self-training. Denote $H_t(x)=\mathrm{Ent}[\hat{y}^{(t)}|x]$ be the entropy of $\hat{y}^{(t)}|x$.
    For $t < T-1$, with probability at least $1-\delta$, the entropy changes as follows: 
    (i) $H_t(x)$ first decreases and then increases if $x \in A(\beta_0, \mu)$; 
    (ii) $H_t(x)$ decreases if $x \in A(\mu, \beta_0^\bot)$; 
    (iii) $H_t(x)$ first increases and then decreases if $x \in A(\beta_0^\bot, \mu^\bot)$; 
    and (iv) $H_t(x)$ increases if $x \in A(\mu^\bot, \beta_0)$.
\end{theorem}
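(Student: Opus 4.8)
The plan is to translate the geometric rotation described by Lemma~\ref{lemma:theta_t_change} into statements about the sign and magnitude of $\beta_t^T x$, and then use monotonicity of the binary entropy function to read off the four cases. First I would record the elementary fact that for a Bernoulli variable with parameter $\vartheta(z)$, the entropy $h(\vartheta(z)) = -\vartheta(z)\log\vartheta(z) - (1-\vartheta(z))\log(1-\vartheta(z))$ is an even function of $z$ that is strictly decreasing in $|z|$, with maximum at $z=0$. Hence $H_t(x)$ is a strictly decreasing function of $|\beta_t^T x|$, and tracking how $|\beta_t^T x|$ evolves as $\beta_t$ rotates suffices.

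Next I would set up coordinates in $\mathbb{R}^2$: write $x = r(\cos\phi,\sin\phi)$ and, without loss of generality, place $\mu$ along a fixed axis so that $\beta_t$ makes angle $\theta_t$ with $\mu$, with $\theta_t$ decreasing monotonically from $\theta_0$ toward $0$ by Lemma~\ref{lemma:theta_t_change} (with probability at least $1-\delta$). Then $\beta_t^T x = r\left\Vert{\beta_t}\right\Vert\cos(\psi_t)$ where $\psi_t$ is the angle between $x$ and $\beta_t$; as $\beta_t$ rotates through a total angle $\theta_0$ from $\beta_0$ to (essentially) $\mu$, the angle $\psi_t$ changes monotonically, so $|\cos\psi_t|$, and therefore $|\beta_t^T x|$ up to the norm factor, is governed by whether $\psi_t$ passes through $0$ (alignment, $|\beta_t^T x|$ maximal, $H_t$ minimal) or through $\pi/2$ (orthogonality, $|\beta_t^T x| = 0$, $H_t$ maximal). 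The decision line $l(\beta_t)$ is exactly the set $\{x : \beta_t^T x = 0\}$, i.e.\ the locus where $\psi_t = \pi/2$; the region $A(\alpha_1,\alpha_2)$ is the wedge swept by this line. So the four regions correspond precisely to: (i) $x$ lies in the wedge $A(\beta_0,\mu)$, meaning the rotating line $l(\beta_t)$ crosses $x$ during the trajectory — equivalently $\psi_t$ crosses $\pi/2$ — so $|\beta_t^T x|$ first decreases to $0$ then increases, giving $H_t$ first up then down; wait, I must be careful with the pairing. Let me instead argue it cleanly: $x \in A(\beta_0,\mu)$ iff the angle from $x$ to the normal direction decreases through alignment, so $|\beta_t^Tx|$ first increases then decreases and $H_t$ first decreases then increases — case (i). Symmetrically, $x\in A(\mu,\beta_0^\bot)$ means $\psi_t$ moves monotonically toward $\pi/2$ without reaching it, so $|\beta_t^Tx|$ monotonically decreases and $H_t$ monotonically increases; but the theorem states (ii) is a decrease, so the correct reading is that $A(\mu,\beta_0^\bot)$ is the wedge where $x$ starts near the decision line and the line rotates away, making $|\beta_t^Tx|$ increase and $H_t$ decrease. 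The remaining two cases (iii), (iv) follow by the same dichotomy applied to the antipodal wedges $A(\beta_0^\bot,\mu^\bot)$ and $A(\mu^\bot,\beta_0)$, which partition the complementary half-plane and are traversed in the reverse order.

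Concretely, the key steps in order are: (1) establish the entropy-versus-$|z|$ monotonicity lemma for $h\circ\vartheta$; (2) fix coordinates and use Lemma~\ref{lemma:theta_t_change} to assert $\theta_t\downarrow$ monotonically on the good event of probability $\ge 1-\delta$; (3) express $\beta_t^T x$ in polar form and identify the four wedges $A(\cdot,\cdot)$ with the four possible qualitative behaviors of the angle between $x$ and $\beta_t$ (strictly toward alignment, strictly toward orthogonality, through alignment, through orthogonality) as the unit vector $\beta_t/\Vert\beta_t\Vert$ sweeps from $\beta_0$ to $\mu$; (4) combine to get monotone or unimodal behavior of $|\beta_t^T x|$ on each wedge; (5) apply the step-(1) lemma to convert each into the claimed entropy profile, and union-bound the failure probability. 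The main obstacle I anticipate is step~(3): one must verify that the trajectory of $\beta_t$ is genuinely a monotone rotation (not merely that the endpoint angle shrinks), so that each $x$ falls into exactly one of the four wedges and the angle $\psi_t$ between $x$ and $\beta_t$ is itself monotone in $t$ — this is where the precise content of Lemma~\ref{lemma:theta_t_change}, namely $\theta_t \ge \theta_{t+1}$ for all $t<T-1$ rather than just $\theta_{T-1}\le O(\varepsilon)$, is essential, and one has to rule out the norm $\Vert\beta_t\Vert$ vanishing or the rotation overshooting. A secondary subtlety is correctly orienting the wedges relative to the direction of rotation so that the "first decreases then increases" versus "first increases then decreases" cases are not swapped; this is bookkeeping but must be done carefully, perhaps by checking the behavior at a single representative point in each wedge (e.g.\ a point on $l(\beta_0)$ and a point along $\mu$).
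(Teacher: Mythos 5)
Your route is the same as the paper's: reduce the claim to the fact that the entropy of $\mathrm{Ber}(\vartheta(z))$ is even in $z$ and strictly decreasing in $|z|$ (the paper's Lemma \ref{lemma:ent_of_ber}), write $|\beta_t^Tx|=\Vert x\Vert\,|\cos\theta^{(t)}(x)|$ with $\theta^{(t)}(x)$ the angle between $\beta_t$ and $x$ (note $\Vert\beta_t\Vert=1$ by the normalization step of Algorithm \ref{alg:selftraining_with_batch}, so your worry about the norm vanishing is moot), use Lemma \ref{lemma:theta_t_change} to treat the update as a monotone rotation of $\beta_0$ toward $\mu$, and finish with a four-wedge case analysis. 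The structure is therefore right, and your flagged concern --- that $\theta_t\ge\theta_{t+1}$ alone does not rule out $\beta_t$ hopping to the other side of $\mu$, so one must know the trajectory is a genuine one-sided rotation --- is legitimate; the paper itself handles this only implicitly, by defining $A(\cdot,\cdot)$ along the trajectory and asserting that pseudo-labels do not flip across iterations except when $\vartheta(\beta_t^Tx)$ is very close to $1/2$.

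The one concrete slip is your reading of the geometry: by the theorem's definition, $l(\alpha)=\{x: x^T\alpha^\bot=0\}$ is the line spanned by $\alpha$ itself (the locus where the angle to $\alpha$ is $0$ or $\pi$), \emph{not} the decision boundary $\{x:\alpha^Tx=0\}$; the decision boundary of $\beta_t$ is $l(\beta_t^\bot)$. Because of this you first derived the wrong monotonicity for $x\in A(\mu,\beta_0^\bot)$ and then ``corrected'' it by appealing to what the theorem asserts, which is circular, and you left (iii)--(iv) at the level of ``same dichotomy, reverse order.'' With the definition read correctly the bookkeeping is immediate and matches the paper: for $x\in A(\beta_0,\mu)$ the line $l(\beta_t)$ sweeps across $x$, so $\theta^{(t)}(x)$ decreases to $0$ and then increases within $(0,\pi/2)$, giving (i); for $x\in A(\mu,\beta_0^\bot)$ the angle decreases monotonically while staying in $(0,\pi/2)$, so $|\beta_t^Tx|$ increases and $H_t$ decreases, giving (ii); for $x\in A(\beta_0^\bot,\mu^\bot)$ it is the decision boundary $l(\beta_t^\bot)$ that sweeps across $x$, so $|\beta_t^Tx|$ passes through $0$ and $H_t$ first increases then decreases, giving (iii); and for $x\in A(\mu^\bot,\beta_0)$ the angle increases within $(0,\pi/2)$, giving (iv). Your suggested fix --- checking a representative point in each wedge --- would indeed resolve the orientation issue, so the gap is one of unfinished bookkeeping rather than of approach.
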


Based on Theorem \ref{thm:Ent_min_of_self_training}\footnote{Proofs of Lemma~\ref{lemma:theta_t_change} and Theorem~\ref{thm:Ent_min_of_self_training} are in Appendix \ref{appendix:proofs}.}, entropy variation during the iterative process of self-training can be classified into four patterns: (i) a decrease followed by an increase; (ii) a consistent decrease; (iii) a decrease followed by an increase; and (iv) a consistent increase.
These patterns are visualized in Figure \ref{fig:st-entropy}, where the light blue, gray, dark blue, and yellow regions correspond to each respective entropy variation pattern.
Specifically, we partition $\mathbb{R}^2$ using vectors $\beta_0, \mu, \beta_0^{\bot}, \mu^{\bot}$ and delineate the positions of the initial classifier $\beta_{\mathrm{init}}$ and optimal $\mu$. The updates to the classifier during self-training can be interpreted as the gradual rotation of the initial classifier $\beta_{\mathrm{init}}$ toward the Bayes-optimal classifier $\mu$.
Moreover, due to the small initial error of $O(1)$ in the classifier, the angle $\theta_0$ between $\beta_0$ and $\mu$ is relatively small. As a result, the regions $A(\beta_0, \mu)$ and $A(\beta_0^{\bot}, \mu^{\bot})$ occupy an relatively small proportion of $\mathbb{R}^2$, indicating that most samples exhibit monotonic entropy changes.

From experimental and theoretical analyses of self-training, we derive five key insights regarding entropy variation: (i) while the overall entropy of samples typically decreases, this trend does not hold for every individual sample; (ii) most samples exhibit monotonic entropy changes; (iii) the effectiveness of self-training arises from samples undergoing entropy reduction, which enables their transition from incorrect to correct predictions; (iv) in some cases, an increase in entropy could lead to a reversal of predictions, causing previously correct samples to become incorrectly classified; (v) these entropy variations are intrinsically linked to the spatial relationships between the samples and the classifiers $\beta_{\mathrm{init}}$ and $\mu$. These insights will help deepen our understanding of CoT
from the perspective of entropy variation.

\begin{figure}[htbp]
    \centering
    \subfigure[Self-Training]{
        \begin{minipage}[t]{0.43\linewidth} 
            \centering            \includegraphics[width=1.0\linewidth]{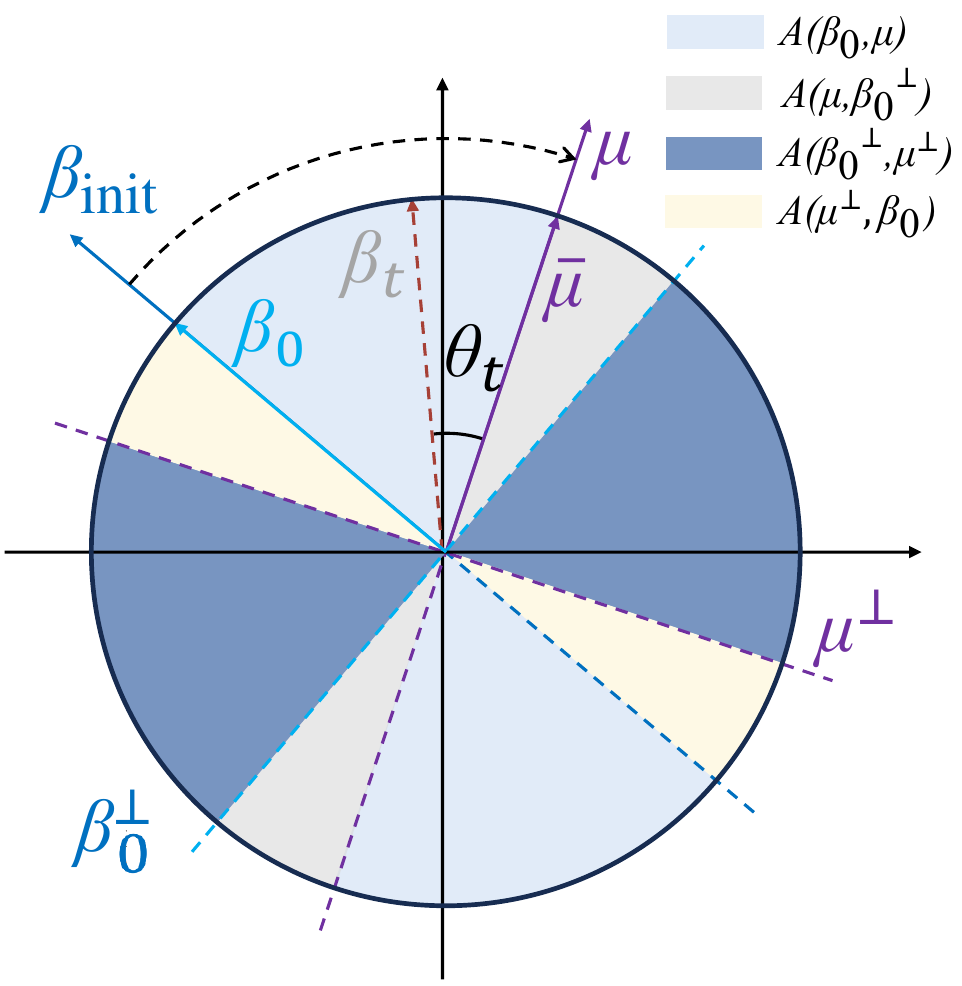}
            \label{fig:st-entropy}
        \end{minipage}
    } \hspace{-1.0em} 
    \subfigure[Chain-of-Thought]{
        \begin{minipage}[t]{0.46\linewidth}
            \centering            \includegraphics[width=1.14\linewidth]{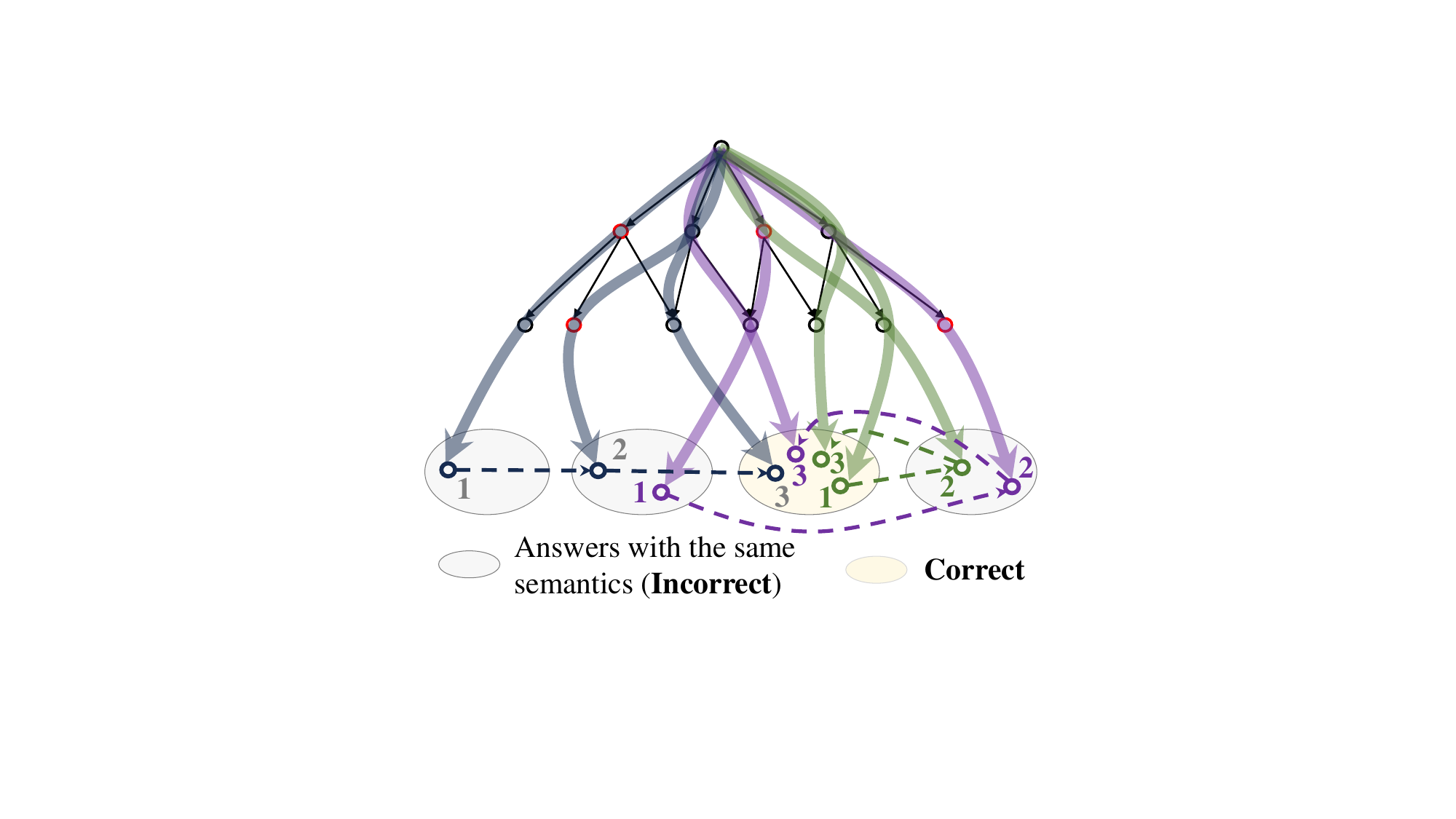}
            \label{fig:cot-entropy}
        \end{minipage}
    }
    \vspace{-4pt} 
    \caption{Visualizations of entropy variations in the iterative process of self-training and CoT reasoning. 
    \textbf{In the self-training diagram}, the iterative process represents the gradual convergence of the initial classifier $\beta_{\mathrm{init}}$ toward the Bayes optimal classifier $\mu$. At each iteration, changes in the angle between the classifier and the samples in different regions correspond to entropy variations within those samples.
    \textbf{In the CoT reasoning diagram}, each node in the directed acyclic graph represents a computation, with red nodes indicating erroneous computations. Each ellipse denotes a set of leaf nodes, corresponding to semantically equivalent answers, and the numbers indicate their respective iteration rounds. Bold arrows are used to represent complete reasoning paths. As the iterations proceed, these paths are gradually corrected. The semantic entropy at each iteration is determined by the distribution of generated answers across different semantic categories.
    }
    \vspace{-6pt} 
    \label{fig:visua}
\end{figure}

\subsection{Semantic Entropy Variation in Chain-of-Thought}
\label{semcot}
Chain-of-thought (CoT) reasoning, akin to self-training, relies on model-generated information to enhance task performance. Specifically, CoT aims to reduce prediction uncertainty in LLMs by leveraging intermediate reasoning processes generated by the models themselves. This uncertainty can be effectively quantified using semantic entropy \cite{farquhar2024detecting}. To formalize this concept, we begin by defining the LLM generation process and the mechanism of iterative CoT reasoning, then introduce the definition of semantic entropy, and finally discuss its parallels with self-training, drawing conclusions from this analogy.

Let $\mathbf{LLM}(\cdot)$ denote a probabilistic language model. We write $s^{\prime} \sim \mathbf{LLM}(s;\tau)$ to indicate that the output $s^{\prime}$ is sampled from the model given input $s$ and temperature parameter $\tau$. Based on this formulation, we next define how an LLM performs reasoning to solve complex problems.

\begin{definition}[\textbf{Reasoning Structures}]
\label{def:reasoning_structures}
Given a question $q$, a prompt $p$, and a temperature parameter $\tau$, the set of reasoning–answer pairs $(r, a)$ generated by LLM is denoted by $\mathcal{J}(q, p; \tau)$, with its associated joint probability density denoted by $J$. A reasoning tree $\mathcal{T} = (\mathcal{V}, \mathcal{E})$ is a directed acyclic graph (DAG) constructed by the LLM to represent the computational steps from the question $q$ to potential answers. Each node $v \in \mathcal{V}$ is associated with a state $\mathfrak{s}(v) \in \{0,1\}$ indicating whether the reasoning at that node is correct ($\mathfrak{s}(v) = 1$) or incorrect ($\mathfrak{s}(v) = 0$). A reasoning path $r$ is a specific path within $\mathcal{T}$ that begins at the root node corresponding to $q$ and terminates at a leaf node representing an answer $a$.
\end{definition}


Building on the formalization of reasoning structures, we introduce the definition and computation of semantic entropy. For clarity, we present the formulation where the LLM input consists only of the question $q$ and the prompt $p$.

\begin{definition}[\textbf{Semantic Entropy}]
\label{def:semantic_entropy}
Let $A = Answer(q,p;\tau)$ be the set of answers generated by an LLM for a given question $q$ and prompt $p$. Assume that this answer set can be partitioned into several disjoint semantic clusters $\mathcal{C} = \{C_i\}_{i\in [k]}$, such that $A = \bigcup_{i\in[k]} C_i$, where exactly one cluster contains the correct answer. Let $g(C \mid q,p;\tau)$ denote the probability distribution over $\mathcal{C}$. The semantic entropy is then defined as $e=\mathbb{E}_C[-\log g(C \mid q,p;\tau)]$. In practical use, let LLMs generate $N$ distinct answers $\hat{A}=\left\{a_i\right\}_{i \in[N]}$, which are then divided into $k$ semantic clusters $\hat{\mathcal{C}}=\left\{C_j\right\}_{j \in[k]}$. By normalizing, we obtain a discrete probability distribution $\left\{g_j\right\}_{j \in[k]}$, where $g_j=\left|C_j\right| / N,\sum_{j\in [k]}g_j=1$. Consequently, we can use $\widehat{e}=-\sum_{j=1}^k g_j \log g_j$ as an approximation of $e$.
\end{definition}

In self-training, pseudo-labels guide the initial classifier toward the Bayes-optimal classifier. Similarly, in CoT, the reasoning process aims to steer the LLM's predictions towards the correct path, and thus, getting the correct answer. Building on this analogy and the discussion in Section \ref{iems}, we propose the following definitions for CoT reasoning.

\begin{definition}[\textbf{Initial and Optimal Paths}]
\label{def:initial_optimal_paths} The initial reasoning path, $r_0$, is the reasoning path generated by LLM for a given question $q$ using an initial prompt $p_0$. The set of optimal reasoning paths, $R_{\mathrm{opt}}$, comprises those reasoning paths $r \in R(q,p;\tau)$ for which the probability of deriving the correct answer, $a_{\mathrm{correct}}$, $J_{a|r}(a_{\mathrm{correct}}|r)$, is greater than or equal to a predefined high confidence threshold $\psi$.
\end{definition}

In Section \ref{iems}, the classifier $\beta$ is updated through information augmentation using unlabeled data. In contrast, the update of reasoning paths in CoT reasoning is achieved by in-depth problem analysis, transforming the problem into more tractable sub-problems, and awakening unused internal knowledge within LLM directly aiding problem-solving. Based on this, existing reasoning paths are updated through error correction or by considering different perspectives.

\begin{definition}[\textbf{Iterative CoT Reasoning}]
\label{def:iterative_cot}
Iterative CoT is a process that refines reasoning paths over successive iterations. A new reasoning path $(r_{t+1},a_{t+1})$ is generated by LLM based on the original question $q$, an adaptive prompt $p'$, and the history of previously generated paths $r_0, \dots, r_t$. The update from $r_t$ to $r_{t+1}$ typically involves identifying the first erroneous node $v_i$ in $r_t$ (where $\mathfrak{s}(v_i)=0$), backtracking, and generating a corrected sub-path.
\end{definition}

To further investigate how semantic entropy evolves within the CoT framework, we adopt the following assumption.

\begin{assumption}
\label{assump:entropy_correctness}
As the semantic entropy $e$ decreases, the likelihood of the LLM with CoT reasoning producing a correct answer to question $q$ increases.
\end{assumption}

Building on the preceding definitions and Assumption~\ref{assump:entropy_correctness}, the iterative process of applying CoT reasoning to tackle complex questions can be interpreted as the LLM progressively producing and correcting to search for an optimal reasoning path within $R_{\mathrm{opt}}$, starting from an initial reasoning path $r_0$. This process is often marked by a gradual reduction in semantic entropy, supported by empirical evidence presented in Figure \ref{fig:entropy_acc}. The decreasing entropy reflects the refinement of predictions, enabling the correction of initial errors and the generation of accurate answers.

Similar to self-training, not all questions show the expected reduction in semantic entropy during CoT reasoning. This phenomenon may arise from the presence of noisy information in reasoning processes, which could misdirect the reasoning trajectory of LLMs. To verify this, we conducted a simple experiment involving three rounds of iterative CoT reasoning on the AQuA arithmetic dataset, analyzing the semantic entropy variation of LLMs. The results, presented in Figure \ref{fig:ent_change_of_CoT} in Appendix, reveal three distinct patterns of semantic entropy variation: (i) monotonic increase or decrease; (ii) increase followed by decrease or decrease followed by increase; (iii) no change. These patterns are visualized in Figure \ref{fig:cot-entropy}. For example, with a sampling number $N=3$ for reasoning processes, if the first iteration yields three semantically distinct answers, this reflects high semantic uncertainty. If the second iteration converges to two distinct answers, uncertainty is reduced. By the third iteration, if all processes yield the same semantic answer, uncertainty may reduce to zero, indicating the LLM has identified an optimal reasoning path leading to the correct answer. Moreover, the other patterns of semantic entropy variation can also be similarly derived.

Although iterative CoT reasoning does not update model parameters, it dynamically adjusts the context (the set of reasoning paths). This process essentially achieves a tightening of the distribution in the generation space. That is, the longer a correct sub-path $r'$ within a reasoning path $r$ becomes, the smaller the scope of the answer set at the terminus of $r'$ (\ie the leaf nodes of the reasoning tree) (see Figure \ref{fig:cot-entropy}). Consequently, the proportion of the cluster corresponding to the correct answer increases, leading to a decrease in semantic entropy. This reduction in semantic entropy, equivalent to an increased probability of the correct answer cluster, aligns with the mathematical objective of label distribution entropy reduction in self-training.

\begin{figure}[htbp]
    \centering
    \subfigure{
        \begin{minipage}[t]{0.48\linewidth} 
            \centering
            \includegraphics[width=1.02\linewidth]{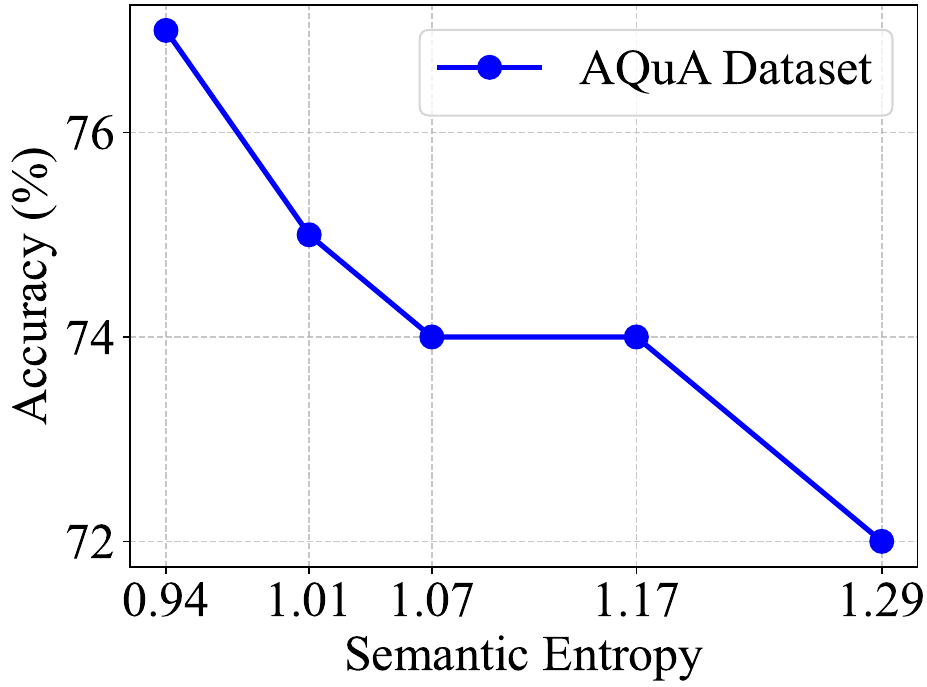}
        \end{minipage}
    } \hspace{-0.6em} 
    \subfigure{
        \begin{minipage}[t]{0.48\linewidth}
            \centering
            \includegraphics[width=1.02\linewidth]{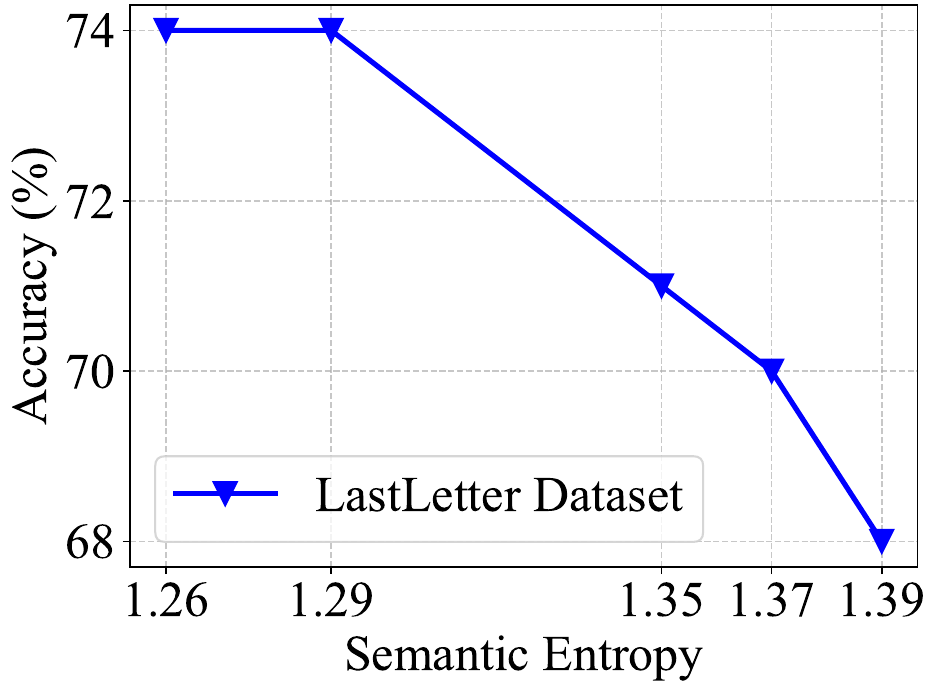}
        \end{minipage}
    }
    \vspace{-16pt} 
    \caption{Accuracy under varying levels of semantic entropy on the AQuA and LastLetters datasets (based on 100 sampled instances).}
    \vspace{-8pt} 
    \label{fig:entropy_acc}
\end{figure}

Based on the insights gained from self-training in Section \ref{iems} and the experimental analysis of CoT reasoning, we derive five important conclusions regarding semantic entropy variations in CoT reasoning: (i) although the overall semantic entropy of questions generally decreases, this trend does not apply to every individual question; (ii) when the reasoning processes of the new iteration are excessively similar to those in the previous iteration, the semantic entropy of certain questions may exhibit no changes; (iii) the effectiveness of CoT reasoning stems from questions undergoing semantic entropy reduction, facilitating a transition from incorrect to correct predictions; (iv) for some questions, an increase in entropy could lead to prediction reversals, where previously correct predictions become incorrect; (v) these variations in semantic entropy are tied to the spatial relationships between the initial state and the optimal reasoning paths of the LLMs with CoT reasoning on a given question.

We will demonstrate in the next section how these findings can be utilized to further enhance CoT performance.


\begin{figure*}
    \begin{center}
    \includegraphics[width=0.99\linewidth]{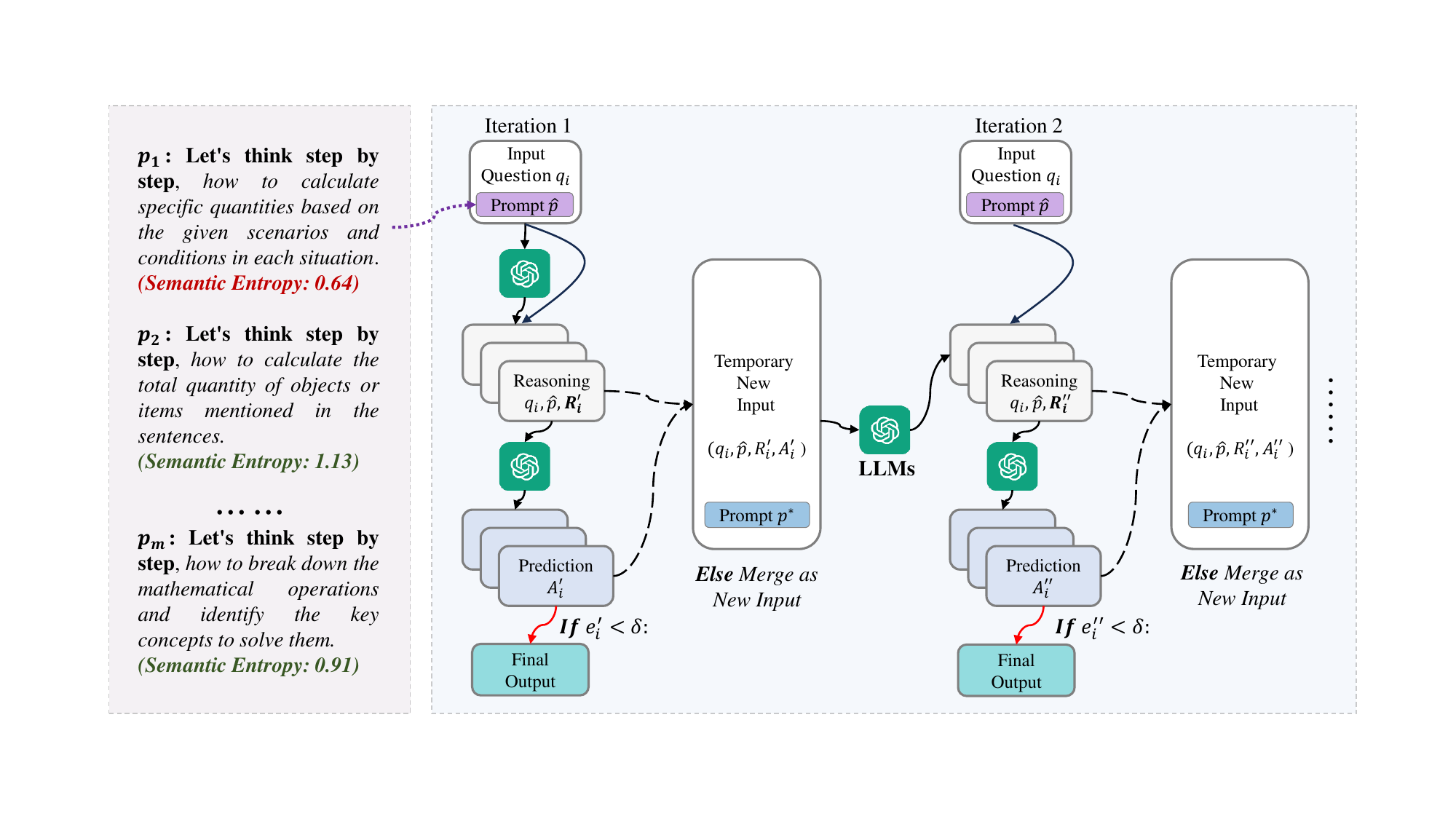}
    \end{center}
    \vspace{-9pt}
    \caption{
    The flowchart of the proposed CoT framework consists of two key modules, \ie \textbf{Task-Specific Prompt} (light purple block) and \textbf{Adaptive Reasoning Iteration} (light blue block). Specifically, the task-specific prompt module first utilizes LLMs to generate $m$ candidate prompts and evaluates their semantic entropy on the given dataset. The prompt with the lowest entropy is selected as the optimal prompt $\hat{p}$, providing guidance for the subsequent adaptive reasoning iteration module to produce high-quality initial reasoning. In the adaptive reasoning iteration module, 
    the uncertainty is calculated at each iteration and compared to a predefined threshold $\delta$. This evaluation determines whether to accept the current prediction as the final output or to proceed to another iteration. If the uncertainty remains high, a new reasoning round is initiated with a new prompt $p^{\ast}$, designed to introduce diversity compared to previous reasoning steps. This iterative process continues until the uncertainty is substantially reduced or the maximum number of iterations is reached.
    }
    \label{fig:overall_framework}
    \vspace{-8pt}
\end{figure*}


\section{Methodology}
\label{cot_framework}

Building on the conclusions from the analysis of semantic entropy variation in CoT reasoning presented in Section \ref{semcot}, we propose a novel CoT framework to improve reasoning performance. Specifically, we design a task-specific prompt module in Section \ref{dsp} to guide the LLMs in generating high-quality initial reasoning processes. Next, in Section \ref{ari}, we introduce an adaptive reasoning iteration module to refine the reasoning process and address the limitations of traditional CoT approaches, \ie over-reasoning and excessive similarity between consecutive reasoning iterations. An overview of our proposed CoT framework is provided in Figure \ref{fig:overall_framework}. Detailed information about the task-specific prompt module is provided in Figure \ref{fig:tsp_detailed} in Appendix.

\subsection{Task-Specific Prompt}
\label{dsp}
The conclusion (v) presented in Section \ref{semcot} underscores the pivotal role of CoT reasoning quality in the initial iteration. A high-quality initial CoT reasoning process effectively reduces the distance between the initial state and the optimal reasoning paths, thereby decreasing the number of iterations required for the LLMs to converge to the correct answers. 
Given that the initial CoT reasoning process is generated by the LLMs based on the provided prompts, this highlights the critical necessity of optimizing prompts to enhance the quality of reasoning processes from the outset.

Previous CoT methods \cite{wei2022chain, ling2024deductive} often rely on general prompts (\eg ``Let's think step by step") to guide the reasoning process. Although such prompts are effective for generic tasks, they frequently fall short of capturing the nuances of domain-specific or fine-grained tasks, resulting in inadequate reasoning processes.
To address this limitation, we propose a task-specific prompt module that automatically searches for the optimal prompt tailored to the task's characteristics.
Specifically, our approach begins by constructing an instruction:
\textbf{\# Instruction:} \textit{``Let's think step by step" is a general prompt that can guide the LLMs to produce reasoning processes. However, in specialized domains, this prompt may lack accuracy and clarity. Below is a dataset sample. Please enhance the ``Let's think step by step, \%s" prompt by adding a sentence in the \%s section to better fit the dataset's characteristics.
}

We then sample a question set $Q^{\prime}=\{q_{1}^{\prime}, q_{2}^{\prime}, \cdots, q_{k}^{\prime}\}$ from the dataset $\mathcal{Q}$, representing the task distribution. This set $Q^{\prime}$ is concatenated with the instruction and fed into the LLMs, which performs $m$ rounds of sampling to generate a candidate task-specific prompt set $P=\{p_1, p_2, \cdots, p_m\}$. 
Next, we sample another disjoint
question set $Q^{\prime \prime}=\{q_{1}^{\prime \prime}, q_{2}^{\prime \prime}, \cdots, q_{k}^{\prime \prime}\}$ from $\mathcal{Q}$, ensuring $Q^{\prime}\cap Q^{\prime \prime} = \emptyset$. Each candidate prompt from $P$ is concatenated with the questions in $Q^{\prime \prime}$ and used for zero-shot CoT inferences \cite{kojima2022large}. During inference, the mean semantic entropy for all questions in $Q^{\prime \prime}$ is calculated under each candidate prompt, yielding in the set $E = \{e_1, e_2, \cdots, e_m\}$. For clarity, we define a simple mapping function $f: P \rightarrow E$ to represent the relationship between candidate prompts and their corresponding mean semantic entropy values. 

Based on conclusions (i) and (iii) from the analysis of semantic entropy variation in Section \ref{semcot}, we can infer that lower semantic entropy corresponds to better CoT performance. Consequently, the prompt with the minimal semantic entropy can be regarded as the optimal prompt:
\begin{equation}
\hat{p}=f^{-1}\left(e_{\arg \min _i\left\{e_i \mid e_i \in E\right\}}\right),
\label{eq4}
\end{equation}
where $f^{-1}(\cdot)$ function maps the minimum mean semantic entropy value back to its corresponding prompt $\hat{p}$. This process ensures that the selected prompt $\hat{p}$ minimizes the semantic uncertainty associated with the specific task, thereby improving the quality of initial reasoning processes.

During the testing phase, we concatenate the optimal prompt $\hat{p}$ derived from Eq. (\ref{eq4}) with the question, and input the combined text into the LLMs. Following the self-consistency CoT approach \cite{wang2022self}, we perform $N$ rounds of sampling to generate diverse reasoning processes:
\begin{equation}
R_{i}^{\prime} = \{\mathbf{LLM}(\mathbf{Concat}(q_i, \hat{p}))_{j} \mid j = 1, 2, \dots, N \}, 
\label{eq5}
\end{equation}
where $\mathbf{LLM}(\cdot)_{j}$ denotes the $j$-th sampling result produced by the LLMs, $R_{i}^{\prime} = \{r_{i,1}^{\prime}, r_{i,2}^{\prime}, \dots, r_{i,N}^{\prime}\}$ represents the set of reasoning processes generated, $q_i$ corresponds to the $i$-th question in the dataset $\mathcal{Q}$, and the $\mathbf{Concat}(\cdot)$ function refers to the sequential concatenation of the specified texts. Next, each of the $N$ reasoning processes obtained from Eq. (\ref{eq5}) is individually concatenated with the question $q_i$ and the prompt $\hat{p}$. The resulting concatenated texts are then fed into the LLMs to generate predictions:
\begin{equation}
A_{i}^{\prime} = \{\mathbf{LLM}(\mathbf{Concat}(q_i, \hat{p}, r_{i,j}^{\prime})) \mid j = 1, \dots, N \},
\label{eq6}
\end{equation}
where $A_{i}^{\prime} = \{a_{i,1}^{\prime}, a_{i,2}^{\prime}, \dots, a_{i,N}^{\prime}\}$ represents the set of predictions generated by LLMs. At this stage, two options are available: (i) select the most frequent class from the $N$ answers in $A_{i}^{\prime}$ as the final output (iteration terminates); (ii) concatenate the question $q_i$, the reasoning processes $R_{i}^{\prime}$, and the predictions $A_{i}^{\prime}$ for a new round of reasoning and prediction. If option (\romannumeral2) is chosen, when should the iteration be stopped?
Meanwhile, how can we ensure that the newly generated reasoning and predictions surpass the previous ones? 
These issues will be discussed in depth in Section \ref{ari}, where corresponding solutions will also be proposed.

\begin{table*}[htbp]
  \centering
  \resizebox{\textwidth}{!}{
  \begin{tabular}{lccccccccccc}
    \toprule
    \textbf{Method} & \multicolumn{6}{c}{\textbf{Arithmetic}} & \multicolumn{2}{c}{\textbf{Commonsense}} & \multicolumn{2}{c}{\textbf{Symbolic}} & \textbf{Overall} \\
    \cmidrule{2-12}
    & MultiArith & GSM8K & SingleEq & AddSub & AQuA & SVAMP & STQA & CSQA & Letter & Coin & Avg. (\%) \\
    \midrule
    RE2 & 91.7 & 76.3	& 81.5 & 72.9 &	54.3 &	78.3 & 66.2& 74.9 & 57.0 &	91.2 &	74.4 \\
    RE2 + SC & 96.2 & 77.3 & 87.0 &	80.0 &	60.6 &	83.2 & 	\textbf{68.6} &	77.5 &	63.8 &	\textbf{98.0} &	79.2 \\
    \hline
    Zero-Shot & 51.2 & 10.8 & 62.4 & 56.7 & 38.6 & 56.3 & 66.2 & 74.5 & 1.4 & 50.2 & 46.8 \\
    Zero-Shot-CoT & 92.8 & 74.7 & 84.4 & 74.7 & 55.5 & 77.0 & 63.5 & 73.6 & 55.0 & 93.4 & 74.5 \\
    Zero-Shot-CoT + SC & 95.7 & 79.2 & 88.8 & 81.3 & 63.0 & 82.2 & 65.9 & 75.3 & 66.2 & 97.2 & 79.5 \\
    + \textbf{TSP} & \makecell{97.0 \\ {\cmth (+1.3)}} & \makecell{81.1 \\ {\cmth (+1.8)}} & \makecell{90.0 \\ {\cmth (+1.2)}} & \makecell{84.8 \\ {\cmth (+3.5)}} & \makecell{65.7 \\ {\cmth (+2.7)}} & \makecell{85.5 \\ {\cmth (+3.3)}} & \makecell{66.7 \\ {\cmth (+0.8)}} & \makecell{76.7 \\ {\cmth (+1.3)}} & \makecell{68.4 \\ {\cmth (+2.2)}} & \makecell{97.6 \\ {\cmth (+0.4)}} & \makecell{81.4 \\ {\cmth (+1.9)}} \\
   + \textbf{ARI} & \makecell{96.7 \\ {\cmth (+1.0)}} & \makecell{82.6 \\ {\cmth (+3.4)}} & \makecell{92.1 \\ {\cmth (+3.3)}} & \makecell{87.1 \\ {\cmth (+5.8)}} & \makecell{69.3 \\ {\cmth (+6.3)}} & \makecell{87.1 \\ {\cmth (+4.9)}} & \makecell{67.5 \\ {\cmth (+1.6)}} & \makecell{\textbf{77.5} \\ {\cmth (+2.2)}} & \makecell{75.8 \\ {\cmth (+9.6)}} & \makecell{97.2 \\ {\cmth (+0.0)}} & \makecell{83.3 \\ {\cmth (+3.8)}} \\
    + \textbf{TSP} + \textbf{ARI} & \makecell{\textbf{98.2} \\ {\cmth (+2.5)}} & \makecell{\textbf{83.0} \\ {\cmth (+3.8)}} & \makecell{\textbf{92.9} \\ {\cmth (+4.1)}} & \makecell{\textbf{88.4} \\ {\cmth (+7.1)}} & \makecell{\textbf{70.1} \\ {\cmth (+7.1)}} & \makecell{\textbf{87.5} \\ {\cmth (+5.3)}} & \makecell{66.7 \\ {\cmth (+0.8)}} & \makecell{76.7 \\ {\cmth (+1.4)}} & \makecell{\textbf{77.2} \\ {\cmth (+11.0)}} & \makecell{96.4 \\ \textcolor{red}{(-0.8)}} & \makecell{\textbf{83.7} \\ {\cmth (+4.2)}} \\
    \bottomrule
  \end{tabular}}
      \vspace{-8pt}
    \caption{Accuracy (\%) across ten reasoning datasets from three categories of zero-shot reasoning tasks. The number of self-consistency (SC) sampling is fixed at 3 for all cases. Blue and red fonts indicate increases and decreases in task performance compared to the ``Zero-Shot-CoT + SC" method, respectively, while bold font highlights the best performance in each column.}
  \label{tab:zero-shot}
      \vspace{-6pt}
\end{table*}

\subsection{Adaptive Reasoning Iteration}
\label{ari}

The conclusion (iv) presented in Section \ref{semcot} highlights that once the reasoning processes guide the LLMs to predictions with low uncertainty, deeper iterations do not further reduce semantic entropy. Instead, such iterations often introduce noisy information, undermining predictive accuracy. This phenomenon, which we term over-reasoning, risks altering correct early predictions during subsequent iterations.
To address this, we calculate the semantic entropy $e_{i}^{\prime}$ of the predictions $A_{i}^{\prime}$ and compare it against a predefined threshold $\delta$. If $e_{i}^{\prime} \leq \delta$, the predictions $A_{i}^{\prime}$, derived from reasoning processes $R_{i}^{\prime}$, is accepted as the final output. By leveraging semantic entropy to quantify the uncertainty of LLMs, we can stop iterations at the right moment to output predictions, effectively mitigating the risk of over-reasoning. 

Conversely, if $e_{i}^{\prime} > \delta$, the reasoning process proceeds to the next iteration to further reduce uncertainty. A naive solution involves concatenating the reasoning processes $R_{i}^{\prime}$, predictions $A_{i}^{\prime}$, and question $q_i$ from the previous iteration while reusing the prompt $\hat{p}$. However, this approach often results in new reasoning processes that closely resemble previous iterations.
As demonstrated by the conclusion (ii) presented in Section \ref{semcot}, such high similarity could hinder further entropy reduction.
To overcome this, we propose introducing greater divergence between reasoning iterations, enabling exploration of alternative paths and enhancing the ability of LLMs to solve complex questions. Specifically, we design a new prompt $p^{\ast}$ to replace the $\hat{p}$, fostering a departure from prior reasoning steps:
\textbf{\# $p^{\ast}$}: \textit{Based on the above thoughts, reevaluate from alternative perspectives to produce deeper, solution-oriented insights that go beyond prior inferences. Focus on identifying unexplored assumptions or challenges in the question context, and propose new processes.} 

Then, we concatenate the new prompt $p^\ast$, the original question $q_i$, the previous reasoning processes $R_{i}^{\prime}$, and the previous predictions $A_{i}^{\prime}$ to generate a new round of reasoning:
\begin{equation}
\footnotesize
R_{i}^{\prime \prime} = \{\mathbf{LLM}(\mathbf{Concat}(q_i, \hat{p}, r_{i,j}^{\prime}, a_{i,j}^{\prime}, p^{	\ast})) \mid j = 1, \dots, N\}, 
\label{eq7}
\end{equation}
the prompt $p^{\ast}$ encourages LLMs to critically reflect on prior information, producing reasoning steps $R_{i}^{\prime \prime}$ that surpass and differ from the previous $R_{i}^{\prime}$.
To ensure sufficient divergence between new and previous reasoning, we measure their similarity using the Jaccard index \cite{jadhao2016text}:
\begin{equation}
s_{i}^{\prime \prime} = \mathbf{Sim} (R_{i}^{\prime \prime}, R_{i}^{\prime})=\frac{|R_{i}^{\prime \prime} \cap R_{i}^{\prime}|}{|R_{i}^{\prime \prime} \cup R_{i}^{\prime}|}, s_{i}^{\prime \prime} \in \mathbb{R},
\label{eq8}
\end{equation}
if $s_{i}^{\prime \prime}$ is greater than a predefined threshold $\tau$, Eq. (\ref{eq7}) is reapplied for resampling until the condition is met. 
Finally, the LLMs discard the earlier reasoning $R_{i}^{\prime}$ and the prediction $A_{i}^{\prime}$, generating new predictions based solely on $R_{i}^{\prime \prime}$:
\begin{equation}
\footnotesize
A_{i}^{\prime \prime} = \{\mathbf{LLM}(\mathbf{Concat}(q_i, \hat{p}, r_{i,j}^{\prime \prime})) \mid j = 1, \dots, N\}\},
\label{eq9}
\end{equation}

After obtaining $A_{i}^{\prime \prime}$ via Eq. (\ref{eq9}), its semantic entropy $e_{i}^{\prime \prime}$ is computed and compared to $\delta$ again. If $e_{i}^{\prime \prime} > \delta$, the process repeats (Eqs. (\ref{eq7}) - (\ref{eq9})) until the uncertainty drops to $\delta$ or the maximum iteration count $T$ is reached.

When semantic entropy remains above $\delta$ after $T$ iterations, three scenarios may explain this outcome:
(i) previous iterations contained valid reasoning steps, but randomness in LLMs sampling introduced biases in calculating semantic entropy. Increasing $N$ may mitigate this issue; (ii) effective reasoning paths exist but have yet to be discovered by LLMs. Extending $T$ could allow LLMs to identify such paths;
(iii) the limitations of LLMs make the question inherently unsolvable, rendering further iterations unproductive.

Although increasing $T$ can enhance performance in the second scenario, it also escalates time and computational costs. To strike a balance between performance and efficiency, the strategy proposed in this paper is to stop further iterations when the entropy remains above $\delta$ after $T$ iterations. Instead, we apply majority voting across the predictions from all iterations, selecting the most frequent prediction as the final output. This approach enhances overall reliability while effectively reducing computational overhead.

\section{Experiments}
\subsection{Experimental Settings}
We evaluate our CoT framework on 10 reasoning datasets, including six arithmetic datasets (\ie MultiArith \cite{roy2016solving}, GSM8K \cite{cobbe2021training}, SingleEq \cite{koncel2015parsing}, AddSub \cite{hosseini2014learning}, AQuA \cite{ling2017program}, and SVAMP \cite{patel2021nlp}), two commonsense reasoning datasets (\ie StrategyQA \cite{geva2021did} and CommonsenseQA \cite{talmor2018commonsenseqa}), and two symbolic reasoning datasets (\ie LastLetter and CoinFlip \cite{wei2022chain}).
We utilize GPT-3.5-turbo-0125 as the foundation model for all experiments, given its accessibility and cost-effectiveness.

Our evaluation adopts a progressive comparative approach. Specifically, we first test zero-shot reasoning by directly inputting questions into the LLMs without prompts. Next, we employ zero-shot CoT \cite{kojima2022large}, utilizing general prompts with greedy decoding to generate answers. Finally, we implement zero-shot CoT with self-consistency \cite{wang2022self}, which enhances accuracy through multiple decoding attempts and a voting mechanism. Building on these baselines, we propose a novel CoT framework comprising two modules, \ie task-specific prompt (TSP) and adaptive reasoning iteration (ARI). The TSP module improves the initial CoT reasoning process by replacing generic prompts with task-specific ones, while the ARI module further enhances task performance through iterative refinement of the reasoning process.
Moreover, we also compared two popular CoT methods, \ie RE2 \cite{xu2024re} and Contrastive-CoT \cite{chia2023contrastive}.

\begin{table}[htbp]
\centering
\resizebox{\columnwidth}{!}{ 
\begin{tabular}{lccc}
\hline
\textbf{Method} & MultiArith & GSM8K & Avg. (\%)
\\ 
\hline
Contrastive-CoT & 89.7 & 69.4 & 79.6\\
Contrastive-CoT + SC & 93.0 & 71.9 & 82.5\\
\hline
Few-Shot & 78.3 & 53.8 & 66.1 \\
Few-Shot-CoT & 94.3 & 69.1 & 81.7 \\
Few-Shot-CoT + SC & 97.2 & 73.7 & 85.5\\
+ \textbf{ARI} & \makecell{\textbf{97.3} \\ {\cmth (+0.1)}} & \makecell{\textbf{77.4} \\ {\cmth (+3.7)}} & \makecell{\textbf{87.4} \\ {\cmth (+1.9)}} \\ \hline
\end{tabular}
}
\vspace{-8pt}
\caption{Accuracy across the MultiArith and GSM8K datasets from the arithmetic category of few-shot reasoning tasks. The number of self-consistency (SC) samplings is fixed at 3 for all cases.
}
\label{tab:few-shot}
    \vspace{-14pt}
\end{table}
\subsection{Main Results}

We followed literature \cite{kojima2022large} to construct zero-shot reasoning tasks across all 10 datasets, and performed few-shot reasoning tasks on the MultiArith and GSM8K datasets. The results of these experiments are presented in Table \ref{tab:zero-shot} and Table \ref{tab:few-shot}, respectively.

For the zero-shot task, the 
baseline zero-shot method achieves an overall average accuracy of 46.8\%. Incorporating CoT reasoning (Zero-Shot-CoT) significantly enhances performance, raising the accuracy to 74.5\%. Further improvement is observed with the integration of self-consistency (SC), which increases the average accuracy to 79.5\%.
Building on this foundation, the addition of task-specific prompt (TSP) and adaptive reasoning iteration (ARI) modules further elevates the average accuracy to 83.7\%. This represents a 4.2\% improvement over the SC approach, demonstrating the advantages of our method across various task categories.
Furthermore, our method showed the most significant performance on arithmetic datasets. Specifically, compared to the SC approach, the average performance across six datasets increased from 81.7\% to 86.7\%. This improvement can be attributed to the fact that deep reasoning enables the LLMs to systematically identify solution pathways. In contrast, the performance gains on commonsense datasets are relatively weak, even approaching the level of the zero-shot method. This limitation may arise from the dependency of these tasks on the prior knowledge of the LLMs. If the relevant commonsense knowledge was not encountered during pre-training, deep reasoning alone is insufficient to address the question.

In the few-shot task, the SC approach achieves an average accuracy of 85.4\%. The incorporation of the ARI module results in a significant performance improvement, raising the accuracy to 87.4\%. However, the TSP module is not applicable to few-shot tasks, as the LLMs have already utilized the limited examples provided to generate optimal reasoning in the initial iteration, rendering the reconstruction of a new reasoning process through TSP unnecessary.

Based on the above analysis, it is evident that the proposed ARI module enhances performance in both zero-shot and few-shot tasks, demonstrating its potential as a plug-and-play solution applicable across all CoT methods. Similarly, the TSP module can be applied to zero-shot methods to improve the quality of reasoning in the initial iteration. Ablation experiments presented in the last four rows of Table \ref{tab:few-shot} confirm that, compared to the ARI module alone, the TSP module contributes an average performance improvement of 0.4\%, thereby validating its effectiveness.

\subsection{Adaptive Versus Fixed Iterative Reasoning}

To further verify the effectiveness of our proposed method, we compared its accuracy and time costs with the fixed iterative reasoning approach on the AQuA dataset.

\subsubsection{Comparison of Accuracy and Time Costs}
\label{acctime}

In terms of accuracy (see Figure
\ref{fig:iterationacc}), our method demonstrates a clear advantage. It achieves 70.8\% at the second iteration and maintains stability, reaching 71.3\% by the fifth iteration. In contrast, the fixed iteration method shows slower improvement, peaking at 67.3\% and then dropping to 62.6\% by the fifth iteration. 
Regarding time cost (see Figure
\ref{fig:iterationtime}), both methods exhibit a linear growth trend, but our method is significantly more time-efficient. 
The time cost of our approach increases gradually from 1 hour and 5 minutes at the first iteration to 4 hours and 18 minutes at the fifth iteration, reflecting a moderate growth rate. In comparison, the fixed iteration method follows a steeper trajectory, with the time cost rising from 1 hour and 5 minutes to 6 hours and 29 minutes by the fifth iteration.

Overall, our adaptive iteration outperforms fixed iteration in both effectiveness and efficiency, achieving an optimal balance between the two as early as the second iteration, highlighting its practicality and strong performance.

\begin{figure}[htbp]
    \centering
    \subfigure[Accuracy]{
        \begin{minipage}[t]{0.48\linewidth} 
            \centering
            \includegraphics[width=1.02\linewidth]{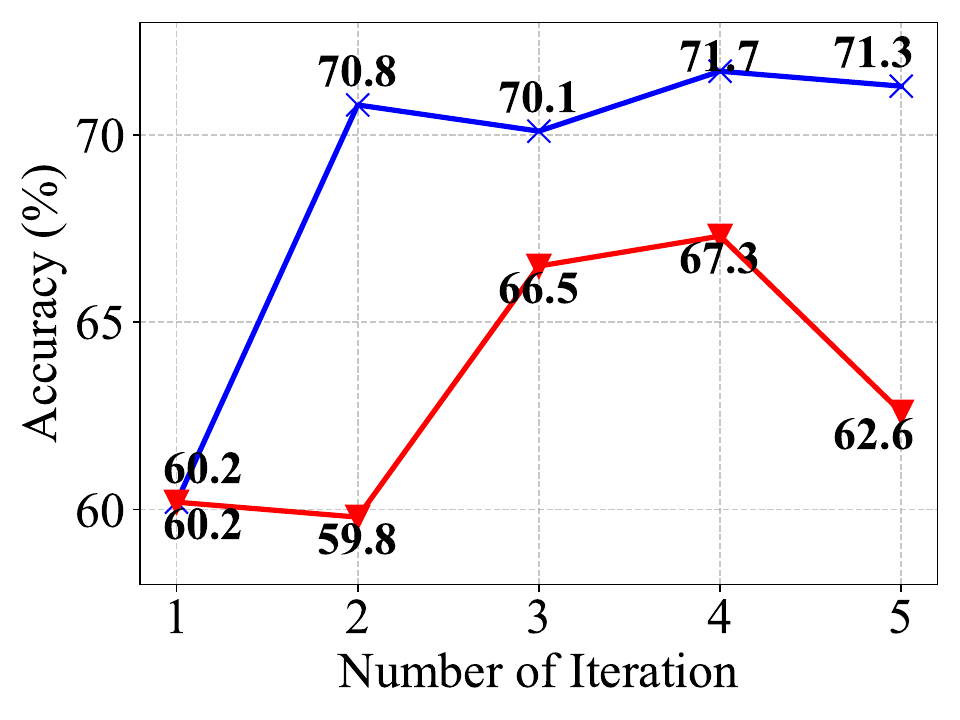}
            \label{fig:iterationacc}
        \end{minipage}
    } \hspace{-0.6em} 
    \subfigure[Time costs]{
        \begin{minipage}[t]{0.48\linewidth}
            \centering
            \includegraphics[width=1.02\linewidth]{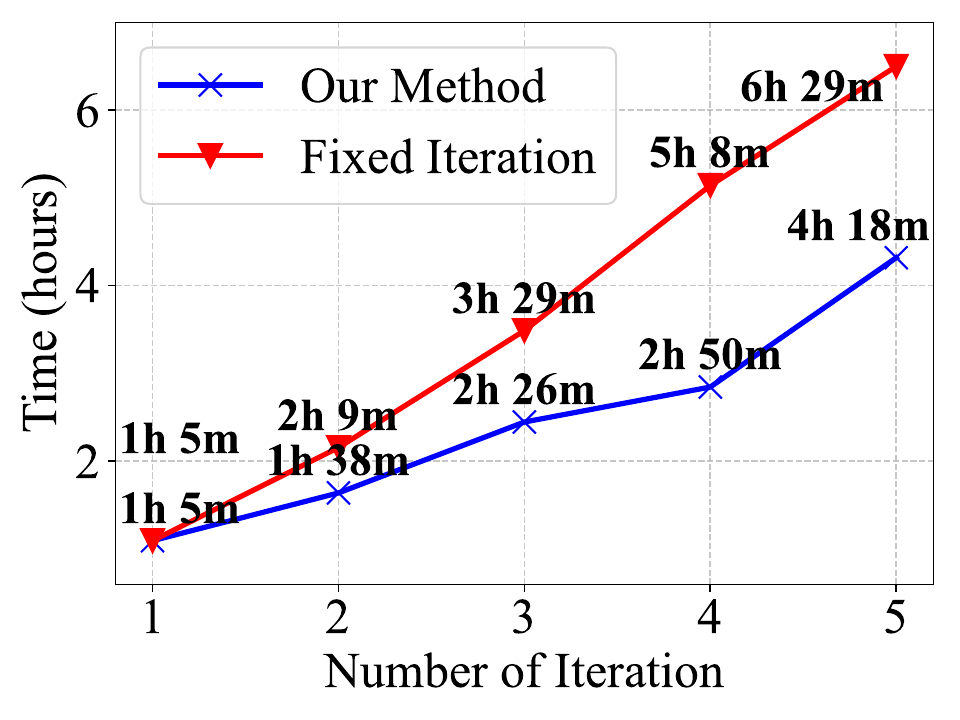}
            \label{fig:iterationtime}
        \end{minipage}
    }
    \vspace{-12pt} 
    \caption{Accuracy and time costs of adaptive reasoning iteration compared to the fixed reasoning iteration on the AQuA dataset.}
    \vspace{-12pt} 
    \label{fig:fixed}
\end{figure}

\subsubsection{Why Our Method Works?}
\label{why}
Our proposed method significantly surpasses fixed reasoning iteration by effectively addressing two critical challenges: (i) over-reasoning, and (ii) high similarity between consecutive reasoning iterations. 
To resolve the issue of over-reasoning, we incorporate a mechanism that quantifies the prediction uncertainty of the LLMs using semantic entropy. The iteration process is terminated as soon as a low-uncertainty state is reached, ensuring predictions are made at the optimal stage. 
For example, as illustrated in Table \ref{tab:addressissues}, the semantic entropy after the first iteration falls below the predefined threshold of 0.95 (indicating that at least two out of three elements in the prediction set are consistent). At this point, further iterations are deemed unnecessary, and the current prediction is finalized as the output. 

When the uncertainty is high and further iterations are required, the issue of high similarity between consecutive iterations may arise. To address this, we propose a novel prompt $p^{\ast}$ to encourage greater divergence in subsequent reasoning iterations, with the Jaccard index introduced to quantify this diversity.
If insufficient diversity is detected, the reasoning process is resampled until predefined conditions are met. 
This strategy effectively ensures the diversity and independence of consecutive reasoning iterations. 
For example, as detailed in Table \ref{tab:addressissues}, the prompt $p^{\ast}$ and resampling strategy successfully guided the LLMs to generate more distinct reasoning paths, transforming an initial incorrect prediction into a correct one. Furthermore, as shown in Table \ref{tab:similarity}, our method achieves lower similarity between consecutive iterations compared to existing approaches.

By dynamically adjusting reasoning iterations and promoting diversity in reasoning paths, our approach enhances accuracy while significantly reducing computational costs.

\section{Conclusion}

This paper explores the conceptual parallels between chain-of-thought (CoT) reasoning and self-training, identifying their shared objective of iteratively leveraging information augmentation to minimize prediction uncertainty. Through theoretical analysis and experimental validation, we reveal semantic entropy dynamics in CoT reasoning and propose improvements, including a task-specific prompt module to optimize initial reasoning processes and an adaptive reasoning iteration module to mitigate over-reasoning and enhance reasoning diversity in consecutive iterations. Collectively, these innovations significantly improve the performance of CoT reasoning in addressing complex tasks.



\nocite{langley00}

\bibliography{example_paper}

\begin{thebibliography}{35}
\providecommand{\natexlab}[1]{#1}
\providecommand{\url}[1]{\texttt{#1}}
\expandafter\ifx\csname urlstyle\endcsname\relax
  \providecommand{\doi}[1]{doi: #1}\else
  \providecommand{\doi}{doi: \begingroup \urlstyle{rm}\Url}\fi

\bibitem[Amini et~al.(2024)Amini, Feofanov, Pauletto, Hadjadj, Devijver, and Maximov]{amini2024self}
Amini, M.-R., Feofanov, V., Pauletto, L., Hadjadj, L., Devijver, E., and Maximov, Y.
\newblock Self-training: A survey.
\newblock \emph{Neurocomputing}, pp.\  128904, 2024.

\bibitem[Bartlett et~al.(1998)Bartlett, Freund, Lee, and Schapire]{bartlett1998boosting}
Bartlett, P., Freund, Y., Lee, W.~S., and Schapire, R.~E.
\newblock Boosting the margin: A new explanation for the effectiveness of voting methods.
\newblock \emph{The annals of statistics}, 26\penalty0 (5):\penalty0 1651--1686, 1998.

\bibitem[Chen et~al.(2022)Chen, Jiang, Wang, Wan, Wang, and Long]{chen2022debiased}
Chen, B., Jiang, J., Wang, X., Wan, P., Wang, J., and Long, M.
\newblock Debiased self-training for semi-supervised learning.
\newblock In \emph{NeurIPS}, volume~35, pp.\  32424--32437, 2022.

\bibitem[Chia et~al.(2023)Chia, Chen, Tuan, Poria, and Bing]{chia2023contrastive}
Chia, Y.~K., Chen, G., Tuan, L.~A., Poria, S., and Bing, L.
\newblock Contrastive chain-of-thought prompting.
\newblock \emph{arXiv preprint arXiv:2311.09277}, 2023.

\bibitem[Chu et~al.(2024)Chu, Chen, Chen, Yu, He, Wang, Peng, Liu, Qin, and Liu]{chu2024navigate}
Chu, Z., Chen, J., Chen, Q., Yu, W., He, T., Wang, H., Peng, W., Liu, M., Qin, B., and Liu, T.
\newblock Navigate through enigmatic labyrinth a survey of chain of thought reasoning: Advances, frontiers and future.
\newblock In \emph{ACL}, pp.\  1173--1203, 2024.

\bibitem[Cobbe et~al.(2021)Cobbe, Kosaraju, Bavarian, Chen, Jun, Kaiser, Plappert, Tworek, Hilton, Nakano, et~al.]{cobbe2021training}
Cobbe, K., Kosaraju, V., Bavarian, M., Chen, M., Jun, H., Kaiser, L., Plappert, M., Tworek, J., Hilton, J., Nakano, R., et~al.
\newblock Training verifiers to solve math word problems.
\newblock \emph{arXiv preprint arXiv:2110.14168}, 2021.

\bibitem[Farquhar et~al.(2024)Farquhar, Kossen, Kuhn, and Gal]{farquhar2024detecting}
Farquhar, S., Kossen, J., Kuhn, L., and Gal, Y.
\newblock Detecting hallucinations in large language models using semantic entropy.
\newblock \emph{Nature}, 630\penalty0 (8017):\penalty0 625--630, 2024.

\bibitem[Frei et~al.(2022)Frei, Zou, Chen, and Gu]{frei2022self}
Frei, S., Zou, D., Chen, Z., and Gu, Q.
\newblock Self-training converts weak learners to strong learners in mixture models.
\newblock In \emph{AISTATS}, pp.\  8003--8021, 2022.

\bibitem[Geva et~al.(2021)Geva, Khashabi, Segal, Khot, Roth, and Berant]{geva2021did}
Geva, M., Khashabi, D., Segal, E., Khot, T., Roth, D., and Berant, J.
\newblock Did aristotle use a laptop? a question answering benchmark with implicit reasoning strategies.
\newblock \emph{Transactions of the Association for Computational Linguistics}, 9:\penalty0 346--361, 2021.

\bibitem[Grandvalet \& Bengio(2004)Grandvalet and Bengio]{grandvalet2004semi}
Grandvalet, Y. and Bengio, Y.
\newblock Semi-supervised learning by entropy minimization.
\newblock In \emph{NeurIPS}, volume~17, 2004.

\bibitem[Hosseini et~al.(2014)Hosseini, Hajishirzi, Etzioni, and Kushman]{hosseini2014learning}
Hosseini, M.~J., Hajishirzi, H., Etzioni, O., and Kushman, N.
\newblock Learning to solve arithmetic word problems with verb categorization.
\newblock In \emph{EMNLP}, pp.\  523--533, 2014.

\bibitem[Jadhao \& Agrawal(2016)Jadhao and Agrawal]{jadhao2016text}
Jadhao, A. and Agrawal, A.
\newblock Text categorization using jaccard coefficient for text messages.
\newblock \emph{International Journal of Science and Research}, 5\penalty0 (5):\penalty0 2047--2050, 2016.

\bibitem[Kojima et~al.(2022)Kojima, Gu, Reid, Matsuo, and Iwasawa]{kojima2022large}
Kojima, T., Gu, S.~S., Reid, M., Matsuo, Y., and Iwasawa, Y.
\newblock Large language models are zero-shot reasoners, 2022.

\bibitem[Koncel-Kedziorski et~al.(2015)Koncel-Kedziorski, Hajishirzi, Sabharwal, Etzioni, and Ang]{koncel2015parsing}
Koncel-Kedziorski, R., Hajishirzi, H., Sabharwal, A., Etzioni, O., and Ang, S.~D.
\newblock Parsing algebraic word problems into equations.
\newblock \emph{Transactions of the Association for Computational Linguistics}, 3:\penalty0 585--597, 2015.

\bibitem[Lee et~al.(2013)]{lee2013pseudo}
Lee, D.-H. et~al.
\newblock Pseudo-label: The simple and efficient semi-supervised learning method for deep neural networks.
\newblock In \emph{ICML}, pp.\  896, 2013.

\bibitem[Ling et~al.(2017)Ling, Yogatama, Dyer, and Blunsom]{ling2017program}
Ling, W., Yogatama, D., Dyer, C., and Blunsom, P.
\newblock Program induction by rationale generation: Learning to solve and explain algebraic word problems.
\newblock In \emph{ACL}, pp.\  158--167, 2017.

\bibitem[Ling et~al.(2024)Ling, Fang, Li, Huang, Lee, Memisevic, and Su]{ling2024deductive}
Ling, Z., Fang, Y., Li, X., Huang, Z., Lee, M., Memisevic, R., and Su, H.
\newblock Deductive verification of chain-of-thought reasoning.
\newblock In \emph{NeurIPS}, volume~36, 2024.

\bibitem[Liu et~al.(2024)Liu, Lin, and Liu]{liu2024much}
Liu, J., Lin, J., and Liu, Y.
\newblock How much can rag help the reasoning of llm?
\newblock \emph{arXiv preprint arXiv:2410.02338}, 2024.

\bibitem[Miyato et~al.(2018)Miyato, Maeda, Koyama, and Ishii]{miyato2018virtual}
Miyato, T., Maeda, S.-i., Koyama, M., and Ishii, S.
\newblock Virtual adversarial training: a regularization method for supervised and semi-supervised learning.
\newblock \emph{IEEE transactions on pattern analysis and machine intelligence}, 41\penalty0 (8):\penalty0 1979--1993, 2018.

\bibitem[Mukherjee \& Awadallah(2020)Mukherjee and Awadallah]{mukherjee2020uncertainty}
Mukherjee, S. and Awadallah, A.
\newblock Uncertainty-aware self-training for few-shot text classification.
\newblock In \emph{NeurIPS}, volume~33, pp.\  21199--21212, 2020.

\bibitem[Nayab et~al.(2024)Nayab, Rossolini, Buttazzo, Manes, and Giacomelli]{nayab2024concise}
Nayab, S., Rossolini, G., Buttazzo, G., Manes, N., and Giacomelli, F.
\newblock Concise thoughts: Impact of output length on llm reasoning and cost.
\newblock \emph{arXiv preprint arXiv:2407.19825}, 2024.

\bibitem[Patel et~al.(2021)Patel, Bhattamishra, and Goyal]{patel2021nlp}
Patel, A., Bhattamishra, S., and Goyal, N.
\newblock Are nlp models really able to solve simple math word problems?
\newblock \emph{arXiv preprint arXiv:2103.07191}, 2021.

\bibitem[Roy \& Roth(2016)Roy and Roth]{roy2016solving}
Roy, S. and Roth, D.
\newblock Solving general arithmetic word problems.
\newblock \emph{arXiv preprint arXiv:1608.01413}, 2016.

\bibitem[Scudder(1965)]{scudder1965adaptive}
Scudder, H.
\newblock Adaptive communication receivers.
\newblock \emph{IEEE Transactions on Information Theory}, 11\penalty0 (2):\penalty0 167--174, 1965.

\bibitem[Sun et~al.(2023)Sun, Zheng, Xie, Liu, Chu, Qiu, Xu, Ding, Li, Geng, et~al.]{sun2023survey}
Sun, J., Zheng, C., Xie, E., Liu, Z., Chu, R., Qiu, J., Xu, J., Ding, M., Li, H., Geng, M., et~al.
\newblock A survey of reasoning with foundation models.
\newblock \emph{arXiv preprint arXiv:2312.11562}, 2023.

\bibitem[Talmor et~al.(2018)Talmor, Herzig, Lourie, and Berant]{talmor2018commonsenseqa}
Talmor, A., Herzig, J., Lourie, N., and Berant, J.
\newblock Commonsenseqa: A question answering challenge targeting commonsense knowledge.
\newblock \emph{arXiv preprint arXiv:1811.00937}, 2018.

\bibitem[Wang et~al.(2023)Wang, Xu, Lan, Hu, Lan, Lee, and Lim]{wang2023plan}
Wang, L., Xu, W., Lan, Y., Hu, Z., Lan, Y., Lee, R. K.-W., and Lim, E.-P.
\newblock Plan-and-solve prompting: Improving zero-shot chain-of-thought reasoning by large language models.
\newblock \emph{arXiv preprint arXiv:2305.04091}, 2023.

\bibitem[Wang et~al.(2022)Wang, Wei, Schuurmans, Le, Chi, Narang, Chowdhery, and Zhou]{wang2022self}
Wang, X., Wei, J., Schuurmans, D., Le, Q., Chi, E., Narang, S., Chowdhery, A., and Zhou, D.
\newblock Self-consistency improves chain of thought reasoning in language models.
\newblock \emph{arXiv preprint arXiv:2203.11171}, 2022.

\bibitem[Wei et~al.(2022)Wei, Wang, Schuurmans, Bosma, Xia, Chi, Le, Zhou, et~al.]{wei2022chain}
Wei, J., Wang, X., Schuurmans, D., Bosma, M., Xia, F., Chi, E., Le, Q.~V., Zhou, D., et~al.
\newblock Chain-of-thought prompting elicits reasoning in large language models.
\newblock In \emph{NeurIPS}, volume~35, pp.\  24824--24837, 2022.

\bibitem[Xu et~al.(2024)Xu, Tao, Shen, Xu, Xu, Long, Lou, and Ma]{xu2024re}
Xu, X., Tao, C., Shen, T., Xu, C., Xu, H., Long, G., Lou, J.-G., and Ma, S.
\newblock Re-reading improves reasoning in large language models.
\newblock In \emph{EMNLP}, pp.\  15549--15575, 2024.

\bibitem[Yang et~al.(2022)Yang, Song, King, and Xu]{yang2022survey}
Yang, X., Song, Z., King, I., and Xu, Z.
\newblock A survey on deep semi-supervised learning.
\newblock \emph{IEEE Transactions on Knowledge and Data Engineering}, 35\penalty0 (9):\penalty0 8934--8954, 2022.

\bibitem[Zhang et~al.(2021)Zhang, Wang, Hou, Wu, Wang, Okumura, and Shinozaki]{zhang2021flexmatch}
Zhang, B., Wang, Y., Hou, W., Wu, H., Wang, J., Okumura, M., and Shinozaki, T.
\newblock Flexmatch: Boosting semi-supervised learning with curriculum pseudo labeling.
\newblock \emph{Advances in Neural Information Processing Systems}, 34:\penalty0 18408--18419, 2021.

\bibitem[Zhong et~al.(2024)Zhong, Liu, Pan, Zhang, Zhou, Liang, Wu, Lyu, Shu, Yu, et~al.]{zhong2024evaluation}
Zhong, T., Liu, Z., Pan, Y., Zhang, Y., Zhou, Y., Liang, S., Wu, Z., Lyu, Y., Shu, P., Yu, X., et~al.
\newblock Evaluation of openai o1: Opportunities and challenges of agi.
\newblock \emph{arXiv preprint arXiv:2409.18486}, 2024.

\bibitem[Zou et~al.(2018)Zou, Yu, Kumar, and Wang]{zou2018unsupervised}
Zou, Y., Yu, Z., Kumar, B., and Wang, J.
\newblock Unsupervised domain adaptation for semantic segmentation via class-balanced self-training.
\newblock In \emph{Proceedings of the European conference on computer vision (ECCV)}, pp.\  289--305, 2018.

\bibitem[Zou et~al.(2019)Zou, Yu, Liu, Kumar, and Wang]{zou2019confidence}
Zou, Y., Yu, Z., Liu, X., Kumar, B., and Wang, J.
\newblock Confidence regularized self-training.
\newblock In \emph{CVPR}, pp.\  5982--5991, 2019.

\end{thebibliography}
\bibliographystyle{icml2025}

\newpage
\appendix
\onecolumn
\section{Appendix}

\subsection{Omitted Definitions and Proofs in Section \ref{iems}}\label{appendix:proofs}

In this section, we supplement the definitions and detailed algorithm of self-training that were not elaborated in Section \ref{iems} and provide the proofs of Lemma \ref{lemma:theta_t_change} and Theorem \ref{thm:Ent_min_of_self_training}. Our results primarily reference \cite{frei2022self}, and most definitions and algorithms will adopt the settings from their work. Here, we consider a very simple mixture Gaussian model (GMM) rather than adopting the more general but relatively more complex sub-exponential distribution with parameters $K,U,U',R$ as defined in \cite{frei2022self}.

\begin{definition}[Gaussian Mixture Model]\label{def:gmm}
    A joint distribution $(x,y)\sim \mathcal{D}$ over $\mathbb{R}^d\times \{\pm 1\}$ is called Gaussian mixture model, if $y\sim \mathrm{Unif}(\{\pm 1\})$, and $x|y\sim \mathcal{N}(y\mu,I)$, where $\mu\in\mathbb{R}^d$ is the mean of $\mathcal{D}$. We use $\mathcal{D}_x$ to denote the marginal distribution of $\mathcal{D}$.
\end{definition}

The information entropy of a discrete random variable $X$ is defined as $\mathrm{Ent}[X]=\mathbb{E}[-\log X]=-\sum_{x\in \mathcal{X}}p(x)\log p(x)$, where $\mathcal{X}$ is the set of values that the random variable $X$ can take. A classifier for the Gaussian mixture model is given by $x\mapsto \mathrm{sgn}(\beta^Tx)$, where $\beta\in\mathbb{R}^d$ is an arbitrary vector. So we use $\beta$ to denote a classifier without saying $x\mapsto \mathrm{sgn}(\beta^Tx)$. According to Fact 3.4 in \cite{frei2022self}, the Bayes-optimal classifier of the Guassian mixture model defined in Definition \ref{def:gmm} is $\mu$. Assume we can access a initial classifier pseudo-labeler $\beta_{\mathrm{init}}$, which is also called a pseudo-labeler, and the population error of $\beta_{\mathrm{init}}$ is sufficiently small but constant. We then use a weight-normalized logistic regression method to train start from $\beta_{\mathrm{init}}$ using only unlabeled samples. The loss function is $\ell(z)=\log(1+\exp(-z))$. 
Let $\sigma>0$ be temperature. The training dataset $S$ is partitioned into $T$ batches of size $B$. The general process of the self-training algorithm involves multiple iterations, where in each iteration, a batch of data is assigned pseudo-labels. The data labeled in this iteration, along with the existing labeled data and data pseudo-labeled in previous iterations, is used to update the model. This process continues until all samples have been assigned pseudo-labels. The detailed and formal algorithm description of self training using the pseudo-label strategy is presented in Algorithm \ref{alg:selftraining_with_batch}.

\begin{algorithm}
	\caption{Self-Training}
	\label{alg:selftraining_with_batch}
        \begin{algorithmic}[1]
            \Require Training dataset $S=\{x_i^{(t)}\}_{1\le i\le B\atop 0\le t\le T-1}$, step size $\eta$, temperature $\sigma>0$, initial pseudo-labeler $\beta_{\mathrm{init}}$
            \State $\beta_0\leftarrow \beta_{\mathrm{init}} /\left\|\beta_{\mathrm{init}}\right\|$ 
            \For{$t=0, 1, \cdots, T-1$}
                \State Generate pseudo-labels $\widehat{y}_i^{(t)}=\operatorname{sgn}\left(\beta_t^Tx_i^{(t)}\right)$ for batch $\{x_i^{t}\}_{1\le i\le B}$ 
                \State $\tilde{\beta}_{t+1}=\beta_t-\frac{\eta}{B} \sum_{i=1}^B \nabla \ell\left(\frac{1}{\sigma} \cdot \widehat{y}_i^{(t)} \cdot\left(\beta_t^Tx_i^{(t)}\right)\right)$
                \State $\beta_{t+1}=\tilde{\beta}_{t+1} /\left\|\tilde{\beta}_{t+1}\right\|$ 
                \EndFor \\ 
              \Return$\beta_{T-1}$
        \end{algorithmic}
\end{algorithm}


\subsubsection{Proof of Lemma \ref{lemma:theta_t_change}}

We first restate Lemma \ref{lemma:theta_t_change}.

\begin{lemma}[Lemma \ref{lemma:theta_t_change}, restate]\label{lemma:theta_t_change.appendix}
    Suppose $(x,y)\sim \mathcal{D}$ follows a mixture Gaussian models in $\mathbb{R}^d\times\{\pm 1\}$ with mean $\mu$ satisfying $\left\Vert{\mu}\right\Vert=\Theta(1)$, i.e., $y\sim\mathrm{Unif}(\{\pm 1\})$ and $x|y\sim \mathcal{N}(y\mu,I)$. Let $\ell(z)=\log(1+\exp(-z))$,
    and assume $\sigma\ge \max(1,\left\Vert{\mu}\right\Vert)$. Assume we can access a initial pseudo-labeler $\beta_{\mathrm{init}}$ which satisfies $\Pr_{(x,y)\sim \mathcal{D}}[y\ne\mathrm{sgn}(\beta_{\mathrm{init}}^Tx)]=O(1)$. Let $\varepsilon,\delta\in (0,1)$, and assume that $B=\tilde{\Omega}(\varepsilon^{-1})$, $T=\tilde{\Omega}(d\varepsilon^{-1})$, $\eta=\tilde{\Theta}(d^{-1}\varepsilon)$, suppose $\theta_t$ is the angle between $\beta_t$ and $\mu$, then by running algorithm \ref{alg:selftraining_with_batch} with step size $\eta$ and batch size $B$, when $t<T-1$, $\theta_t\ge \theta_{t+1}$ holds with probability at least $1-\delta$, and with probability at least $1-\delta$, $\theta_{T-1}\le O(\varepsilon)$.
\end{lemma}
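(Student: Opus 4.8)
\textbf{Proof proposal for Lemma~\ref{lemma:theta_t_change}.}

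The plan is to reduce the statement to the analysis already carried out in \cite{frei2022self} for self-training with weight-normalized logistic regression on sub-exponential mixtures, specialized to the Gaussian case. Their main convergence result guarantees that under the stated choices $B=\tilde\Omega(\varepsilon^{-1})$, $T=\tilde\Omega(d\varepsilon^{-1})$, $\eta=\tilde\Theta(d^{-1}\varepsilon)$, the final iterate $\beta_{T-1}$ satisfies $\Pr_{(x,y)\sim\mathcal D}[y\ne\mathrm{sgn}(\beta_{T-1}^Tx)]\le O(\varepsilon)$ with probability at least $1-\delta$; since for the Gaussian mixture the population $0$--$1$ error of a unit vector $\beta$ is a strictly increasing function of the angle $\theta=\angle(\beta,\mu)$ (explicitly, it equals $\Pr_{g\sim\mathcal N(0,1)}[g<-\|\mu\|\cos\theta]$), this error bound is equivalent to $\theta_{T-1}\le O(\varepsilon)$. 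So the second conclusion is essentially a translation of their error bound into angular language, using the monotone error--angle correspondence (Fact~3.4 and the surrounding discussion in \cite{frei2022self}).

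For the monotonicity claim $\theta_t\ge\theta_{t+1}$ for $t<T-1$, I would track the update in the two-dimensional plane spanned by $\beta_t$ and $\mu$ (if $\beta_t$ is parallel to $\mu$ we are already done). Writing the unnormalized update $\tilde\beta_{t+1}=\beta_t-\frac{\eta}{B}\sum_i \nabla\ell(\sigma^{-1}\hat y_i^{(t)}\beta_t^Tx_i^{(t)})$ and noting $\nabla\ell(\sigma^{-1}\hat y\,\beta^Tx)=-\sigma^{-1}\ell'(\cdots)\hat y\,x$ with $\ell'<0$, the update direction is a positive-coefficient combination of the pseudo-labeled points $\hat y_i^{(t)}x_i^{(t)}$. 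The key structural fact, which is the heart of \cite{frei2022self}, is that the \emph{expected} pseudo-labeled gradient points, up to scaling, in a direction whose component along $\mu$ is strictly positive and dominates; concretely $\mathbb E[\hat y^{(t)} x\mid \beta_t]$ has a positive inner product with $\mu$ controlled below by a function of $\cos\theta_t$ and $\|\mu\|$. Decomposing $\tilde\beta_{t+1}$ into its $\beta_t$-component and its $\mu_\perp$-component (the part of $\mu$ orthogonal to $\beta_t$) and using a concentration bound over the batch of size $B=\tilde\Omega(\varepsilon^{-1})$ to say the empirical gradient is close to its expectation, one shows that the $\mu_\perp$-component of $\tilde\beta_{t+1}$ is nonnegative and that therefore $\cos\theta_{t+1}=\langle \tilde\beta_{t+1},\mu\rangle/(\|\tilde\beta_{t+1}\|\,\|\mu\|)\ge \cos\theta_t$, i.e.\ $\theta_{t+1}\le\theta_t$. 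The initial error condition $\Pr[y\ne\mathrm{sgn}(\beta_{\mathrm{init}}^Tx)]=O(1)$ with a sufficiently small constant guarantees $\theta_0$ is bounded away from $\pi/2$, which is exactly what is needed to make the drift-toward-$\mu$ term positive at every step; a union bound over the $t\le T-1$ iterations converts the per-step high-probability statement into the claimed probability $1-\delta$ (this is where the logarithmic factors hidden in $\tilde\Omega,\tilde\Theta$ absorb the $\log(T/\delta)$).

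The main obstacle is the per-step monotonicity together with its uniform-in-$t$ control: a single gradient step need not decrease the angle if batch noise happens to push $\tilde\beta_{t+1}$ the wrong way, so one has to set up the concentration carefully (the relevant quantities $\hat y_i^{(t)} x_i^{(t)}$ are sub-exponential, so the deviation bound needs Bernstein-type control, not just sub-Gaussian) and verify that the signal term—the positive $\mu$-drift—strictly exceeds the worst-case noise for \emph{every} admissible angle $\theta_t\in[0,\theta_0]$, not merely near convergence. This requires a lower bound on $\langle\mathbb E[\hat y^{(t)}x\mid\beta_t],\mu\rangle$ that is uniform over that range, which is available from the Gaussian computations underlying \cite{frei2022self} (their Lemma on the pseudo-label gradient, specialized from the sub-exponential parameters $K,U,U',R$ to the standard Gaussian $\mathcal N(y\mu,I)$ with $\sigma\ge\max(1,\|\mu\|)$). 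Once that uniform bound and the matching batch-concentration estimate with the stated $B$ are in hand, both conclusions follow by the angle/error dictionary plus a union bound, and no essentially new argument beyond \cite{frei2022self} is needed.
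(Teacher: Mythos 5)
Your plan for the monotonicity claim is workable but reconstructs, from scratch, exactly the content that the paper simply imports from \cite{frei2022self}: the paper cites their Lemma D.2, which packages the drift-plus-concentration analysis into the recursion $\Delta_t^2\le(1-\eta/2C_g)\Delta_{t-1}^2+\frac{\eta\varepsilon}{8C_g}+\frac{2C_d\eta^2}{\sigma^2}$ for $\Delta_t=\Vert\beta_t-\overline{\mu}\Vert$, and then converts distance to angle via the identity $\Delta_t=2\sin(\theta_t/2)$ (both $\beta_t$ and $\overline{\mu}$ are unit vectors), so that monotone decrease of $\Delta_t$ while $\Delta_{t-1}>\varepsilon$ immediately gives $\theta_{t+1}\le\theta_t$. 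Your component-wise analysis of the gradient step in the plane spanned by $\beta_t$ and $\mu$, with a Bernstein bound over the batch and a union bound over iterations, would—if carried out—establish the same per-step decrease, but it is strictly more work than needed and all of its ingredients (the uniform lower bound on the $\mu$-drift of the pseudo-labeled gradient, the batch concentration at size $B=\tilde\Omega(\varepsilon^{-1})$, the handling of the normalization) are exactly what Lemma D.2 of \cite{frei2022self} already encapsulates. So on this part the two routes differ only in whether one re-derives or cites; the citation route is what the paper intends and is the cleaner reduction.

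The genuine gap is in your derivation of the terminal bound $\theta_{T-1}\le O(\varepsilon)$. You propose to read it off from $\mathrm{err}(\beta_{T-1})\le\mathrm{err}(\mu)+\varepsilon$ via the ``monotone error--angle correspondence,'' but that correspondence is not bi-Lipschitz at the optimum: for the Gaussian mixture, $\mathrm{err}(\theta)=\Pr_{g\sim\mathcal N(0,1)}[g<-\Vert\mu\Vert\cos\theta]$ has vanishing derivative at $\theta=0$, so $\mathrm{err}(\theta)-\mathrm{err}(0)\asymp\theta^2$ for small $\theta$, and an excess-error bound of $\varepsilon$ only yields $\theta_{T-1}\le O(\sqrt{\varepsilon})$, which is weaker than the stated conclusion. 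The paper avoids this loss precisely by \emph{not} going through the $0$--$1$ error: it bounds $\Delta_{T-1}$ directly from the recursion (with $\eta=\Theta(\varepsilon C_g/K)$ and $T=\tilde\Omega(K\varepsilon^{-1})$) and then uses $\theta_{T-1}=2\arcsin(\Delta_{T-1}/2)=\Theta(\Delta_{T-1})$, which is a genuinely linear (not quadratic) dictionary between the quantity controlled by the recursion and the angle. To repair your argument you would need to replace the error-based translation with a bound on $\Vert\beta_{T-1}-\overline{\mu}\Vert$ (or on $1-\cos\theta_{T-1}$ at the matching rate), i.e., essentially the distance recursion the paper quotes.
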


\begin{proof}

    To prove this lemma, we first present the results on sample complexity for labeled and unlabeled data obtained in \cite{frei2022self} for self-training. Theorem 4.1 in \cite{frei2022self} ensures that we can obtain a classifier with sufficiently small but constant error. More precisely, a standard logistic regression procedure produces a pseudolabler that can achieve the desired constant accuracy by using $O(d)$ labeled samples, which essentially represents the sample complexity of labeled data. As for the unlabeled data, the sample complexity is expressed in the lemma below.

        
    
    \begin{lemma}[The Sample Complexity for Unlabeled Data, Theorem 3.6 in \cite{frei2022self}]\label{lemma:sample_comlexity_of_ul_data}
        Suppose that $(x, y)\sim \mathcal{D}$ follows a mixture distribution with mean $\mu$ satisfying $\left\Vert{\mu}\right\Vert=\Theta(1)$ and parameters $K, U, U', R = \Theta(1)$.  Let $\ell$ be well-behaved for some $C_\ell\ge 1$, and assume the temperature satisfies $\sigma \ge \max(R , \left\Vert{\mu}\right\Vert)$.  Assume access to a pseudo-labeler $\beta_{\mathrm{init}}$ which satisfies $\Pr_{(x, y)\sim \mathcal{D}}(y\neq \mathrm{sgn}({\beta_{\mathrm{init}}}^T{x})) \leq C_{\mathrm{\mathrm{err}}}$, where $C_{\mathrm{\mathrm{err}}} = R^2 / (72 C_\ell U')$.   Let $\varepsilon, \delta\in (0,1)$, and assume that $B = \tilde \Omega\left(\varepsilon^{-1}\right), T = \tilde \Omega \left( d\varepsilon^{-1}   \right), \eta = \tilde \Theta\left(d^{-1} \varepsilon\right)$. Then with probability at least $1-\delta$, by running Algorithm \ref{alg:selftraining_with_batch} with step size $\eta$ and batch size $B$, the last iterate satisfies $\mathrm{err}(\beta_{T-1}) \leq \mathrm{err}(\mu) + \varepsilon$.   In particular, $T = \tilde O(d / \varepsilon)$ iterations using at most $TB = \tilde O(d/\varepsilon^2)$ unlabeled samples suffices to be within $\varepsilon$ error of the Bayes-optimal classifier.
    \end{lemma}

    Some definitions mentioned in Lemma \ref{lemma:sample_comlexity_of_ul_data} can be found in the original paper and are omitted here due to space constraints. We use the function $\ell(z)=\log(1+\exp(-z))$ as our loss function and it's well behaved. For GMM, it is a mixture distribution with parameters $K, U, U', R = \Theta(1)$. Under the conditions of Lemma \ref{lemma:sample_comlexity_of_ul_data}, let $\overline{\mu}=\mu/\left\Vert{\mu}\right\Vert$. We can derive the following lemma regarding $\left\Vert{\beta_t-\overline{\mu}}\right\Vert$, which represents the distance between the classifier produced at each iteration of the algorithm and the normalized optimal classifier.


    \begin{lemma} [Recursion of $\Delta_t^2$, Lemma D.2 in \cite{frei2022self}]\label{lemma:change_of_Delta}
        Suppose $\Delta_t^2 = \|\beta_t - \overline{\mu}\|^2$, then $\Delta_t^2$ satisfies that for $1\le t\le T$, 
        $$\Delta_t^2\le(1-\eta/2C_g)\Delta_{t-1}^2+\frac{\eta\varepsilon}{8C_g}+\frac{2C_d\eta^2}{\sigma^2},$$ where $C_g,C_d,\sigma$ are all some positive constants such that $K=C_dC_g^2\sigma^2\ge 1$, $\Delta_0\le 2$.
    \end{lemma}
    
    
    Note that there is close relationship between $\theta_t$ and $\Delta_t$, we can use the changes in $\Delta_t$ described in Lemma \ref{lemma:change_of_Delta} to characterize the changes in $\theta_t$, leading to the following lemma.
        
    \begin{lemma}\label{lemma:theta_t_conclusion}
         Let $\theta_t$ denote the angle between $\beta_t$ and $\mu$, $\theta_t\in (0,\pi/2)$, and $\Delta_t^2 = \|\beta_t - \overline{\mu}\|^2$. Then for $1\le t<T-1$, $\theta_t\ge \theta_{t+1}$, and $\theta_{T}\le \varepsilon$.
    \end{lemma}
    \begin{proof}
        Since $\beta_t$ and $\overline{\mu}$ both have unit norm, it's easy to verify that 
        $$
        \left\Vert{\beta_t-\overline{\mu}}\right\Vert^2=2(1-\cos\theta)=4\sin^2\frac{\theta_t}{2}
        $$
        It's sufficient to assume that $\theta_t \in (0, \pi/2)$ for any $t$, as the error rate of $\beta_{\mathrm{init}}$ is sufficiently small. Therefore $\Delta_t=2\sin\frac{\theta_t}{2}$, \ie $\theta_t=2\arcsin{\frac{\Delta_t}{2}}$. This implies that $\theta_t$ and $\Delta_t$ share the same monotonicity, so it suffices to show that for $t< T$, $\Delta_t\le \Delta_{t-1}$. By Lemma \ref{lemma:change_of_Delta}, when $\eta = \frac{\varepsilon C_g}{16K}$ and $T \ge 32K\varepsilon^{-1}\log(32K\varepsilon^{-1})$,  it's easy to verify that $\Delta_T \le \varepsilon$, which means that $\Delta_t > \varepsilon$ for $t < T$. Hence, with the recursion of $\Delta_t^2$ in Lemma \ref{lemma:change_of_Delta}, we have
        $$
        \begin{aligned}
            \Delta_t^2-\Delta_{t-1}^2
            &\le -\frac{\eta}{2C_g}\Delta_{t-1}+\frac{\eta\varepsilon}{8C_g}+\frac{2C_d\eta^2}{\sigma^2}\\
            &=-\frac{\eta}{2C_g}\left(\Delta_{t-1}-\left(\frac{1}{4}+\frac{1}{4\sigma^4}\right)\varepsilon\right)\\
            &\le -\frac{\eta}{2C_g}\left(\Delta_{t-1}-\varepsilon\right)\\
            &\le 0
        \end{aligned}
        $$
        At the same time, $\theta_T = 2\arcsin\frac{\Delta_T}{2} = \Theta(\Delta_T) = O(\varepsilon)$. This concludes the proof.
    \end{proof}
    
    Finally, through the conditions and assumptions of Lemma \ref{lemma:sample_comlexity_of_ul_data}, along with the conclusion of Lemma \ref{lemma:theta_t_conclusion}, we complete the proof of Lemma \ref{lemma:theta_t_change}.
\end{proof}

\subsubsection{Proof of Theorem \ref{thm:Ent_min_of_self_training}}


\begin{theorem}[Theorem \ref{thm:Ent_min_of_self_training}, restate]
    Under the assumptions of Lemma \ref{lemma:theta_t_change}, let $d=2$ and suppose $\hat{y}^{(t)}|x \sim \mathrm{Ber}(\vartheta(\beta_t^T x))$ is the pseudo-label of $x$, where $\vartheta(z)=\frac{1}{1+\mathrm{e}^{-z}}$. Define $l(\alpha)$ as the line $x^T \alpha^\bot = 0$, where $\alpha^\bot$ is perpendicular to $\alpha$. Let $A(\alpha_1, \alpha_2)$ denote the region swept by $l(\alpha_1)$ rotating to $l(\alpha_2)$ along the trajectory of $\beta_{\mathrm{init}}$ towards $\mu$ during self-training. Denote $H_t(x)=\mathrm{Ent}[\hat{y}^{(t)}|x]$ be the entropy of $\hat{y}^{(t)}|x$.
    For $t < T$, with probability at least $1-\delta$, the entropy changes as follows: 
    (i) $H_t(x)$ first decreases and then increases if $x \in A(\beta_0, \mu)$; 
    (ii) $H_t(x)$ decreases if $x \in A(\mu, \beta_0^\bot)$; 
    (iii) $H_t(x)$ first increases and then decreases if $x \in A(\beta_0^\bot, \mu^\bot)$; 
    and (iv) $H_t(x)$ increases if $x \in A(\mu^\bot, \beta_0)$.
\end{theorem}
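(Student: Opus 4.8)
The plan is to translate the geometry of the rotating classifier from Lemma~\ref{lemma:theta_t_change} into monotonicity statements about the entropy $H_t(x)$ of a Bernoulli variable whose parameter is $\vartheta(\beta_t^T x)$. The first observation is that the binary entropy $h(p)=-p\log p-(1-p)\log(1-p)$ is symmetric about $p=1/2$ and strictly increasing on $(0,1/2)$, so $H_t(x)=h(\vartheta(\beta_t^T x))$ is a strictly increasing function of $|\beta_t^T x|$ (since $\vartheta$ is monotone and $\vartheta(z)=1/2$ iff $z=0$). Hence the entire problem reduces to tracking how the quantity $|\beta_t^T x|$ — equivalently the unsigned distance from the point $x$ to the decision line $l(\beta_t)=\{z:\beta_t^T z=0\}$ — evolves as $\beta_t$ rotates. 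First I would fix a point $x\in\mathbb{R}^2$, write $x=\|x\|(\cos\phi,\sin\phi)$ in polar form, and note $\beta_t^T x=\|x\|\cos(\phi-\psi_t)$ where $\psi_t$ is the angle of $\beta_t$; so $|\beta_t^T x|=\|x\|\,|\cos(\phi-\psi_t)|$, and its monotonicity is governed entirely by the angular gap between $x$ and $\beta_t$ modulo the cosine's behavior.

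Next I would use Lemma~\ref{lemma:theta_t_change}: the angle $\theta_t$ between $\beta_t$ and $\mu$ decreases monotonically to $O(\varepsilon)$, i.e.\ $\beta_t$ rotates monotonically from $\beta_0$ toward $\mu$ along a fixed arc. Without loss of generality I would set up coordinates so that this rotation is, say, counterclockwise, with $\beta_0$ at angle $0$ and $\mu$ at angle $\theta_0>0$; then $\psi_t$ increases from $0$ toward $\theta_0$. The four regions $A(\beta_0,\mu)$, $A(\mu,\beta_0^\bot)$, $A(\beta_0^\bot,\mu^\bot)$, $A(\mu^\bot,\beta_0)$ partition the plane according to where the point's ``angular coordinate relative to the moving normal line'' sits. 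The key computation is: the distance from $x$ to the line $l(\beta_t)$ is $\|x\|\,|\sin(\text{angle between }x\text{ and }\beta_t^\bot)|$; as $\beta_t^\bot$ sweeps monotonically through the arc from $\beta_0^\bot$ to $\mu^\bot$, this distance is a composition of a monotone angular sweep with $|\sin(\cdot)|$, which increases then decreases (or is monotone) depending on whether the sweep crosses a multiple of $\pi$ — i.e.\ whether the line $l(\beta_t)$ ever passes through $x$. So for each of the four regions I would determine whether, over the course of the full rotation, the decision line sweeps past $x$ (entropy peaks, having first increased then decreased — case iii, the $A(\beta_0^\bot,\mu^\bot)$ wedge), sweeps past $x^\bot$-direction i.e.\ $\beta_t$ itself passes through $x$ (entropy dips, first decreasing then increasing — case i, the $A(\beta_0,\mu)$ wedge), or does neither because $x$ stays strictly on one side with the gap monotone (cases ii and iv, monotone decrease or increase respectively, with the sign of the derivative fixed by which side of the ``toward-$\mu$'' direction the point lies). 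I would make this precise by writing $|\beta_t^Tx| = \|x\|\,|\cos(\phi-\psi_t)|$ and differentiating in $t$ via the chain rule: $\frac{d}{dt}|\beta_t^Tx|$ has the sign of $-\operatorname{sgn}(\cos(\phi-\psi_t))\sin(\phi-\psi_t)\dot\psi_t$, and since $\dot\psi_t>0$, the sign flips exactly when $\phi-\psi_t$ crosses $0$ (giving the dip) or $\pi/2$ (giving the peak); classifying $\phi$ against the four boundary angles $0,\theta_0,\pi/2,\pi/2+\theta_0$ yields precisely the four cases.

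The main obstacle I anticipate is bookkeeping the orientation and the edge cases cleanly: the rotation direction is only determined up to the choice of which of $\pm\mu$ the classifier converges to and which perpendicular we call $\mu^\bot$, so I need to fix conventions (and argue the labels $A(\cdot,\cdot)$ are defined consistently with ``the trajectory of $\beta_{\mathrm{init}}$ toward $\mu$'') so that ``first decreases then increases'' lands in the stated wedge rather than its mirror image. A second, more technical subtlety is that Lemma~\ref{lemma:theta_t_change} only gives $\theta_t\ge\theta_{t+1}$ for the discrete iterates, with high probability, and only that $\theta_{T-1}=O(\varepsilon)$ rather than $\theta_{T-1}=0$; so strictly speaking the decision line does not rotate all the way to $l(\mu)$, only to within $O(\varepsilon)$, and for points $x$ lying in the tiny $O(\varepsilon)$-neighborhood of the boundary lines the ``then increases/decreases'' phase may not actually materialize within $T-1$ steps. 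I would handle this by restricting the clean four-case statement to $t<T-1$ as the theorem does, noting the monotone-gap argument only needs the discrete monotonicity of $\psi_t$ (inherited from that of $\theta_t$), and remarking that the boundary-neighborhood points form a measure-$O(\varepsilon)$ set that can be absorbed into the already-small wedges $A(\beta_0,\mu)$ and $A(\beta_0^\bot,\mu^\bot)$ — consistent with the paper's later observation that these wedges are small because $\theta_0$ is small. Everything else is the elementary calculus of $h\circ\vartheta$ and $|\cos|$ composed with a monotone sweep.
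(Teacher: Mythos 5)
Your proposal takes essentially the same route as the paper's proof: reduce the behavior of $H_t(x)$ to that of $|\vartheta(\beta_t^Tx)-\tfrac12|$, i.e.\ to $|\beta_t^Tx|=\Vert x\Vert\,|\cos(\text{angle between }\beta_t\text{ and }x)|$, then track that angle under the monotone rotation of $\beta_t$ toward $\mu$ from Lemma \ref{lemma:theta_t_change}, splitting into the four wedges according to whether the span of $\beta_t$ (entropy dip) or the decision boundary (entropy peak) sweeps past $x$, exactly as the paper's case analysis does. One minor slip: $H_t(x)$ is a strictly \emph{decreasing}, not increasing, function of $|\beta_t^Tx|$, but your subsequent dip/peak reasoning already uses the correct direction, so your four cases agree with the theorem.
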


\begin{proof}
    Lemma \ref{lemma:theta_t_change} describes the trend in the changes of the vector corresponding to the classifier, which is particularly useful for analyzing entropy changes in $\mathbb{R}^2$, as the entropy of the pseudo-label distribution is directly related to the angle between the classifier and the samples. Since we define the pseudo-label distribution as a Bernoulli distribution using the sigmoid function, we first present the following lemma on the entropy of a Bernoulli distribution.
    
    
    \begin{lemma}\label{lemma:ent_of_ber}
        Let $X \sim \mathrm{Ber}(p), p \in [0,1]$. Then $\mathrm{Ent}[X]$ is a decreasing function of $|p - 1/2|$, i.e., if $X_1 \sim \mathrm{Ber}(p_1)$, $X_2 \sim \mathrm{Ber}(p_2)$, then $\mathrm{Ent}[X_1] \ge \mathrm{Ent}[X_2]$ if and only if $|p_1 - 1/2| \le |p_2 - 1/2|$.  
    \end{lemma}
    
    \begin{proof}
        For $X\sim\mathrm{Ber}(p)$, $\mathrm{Ent}[X]=f(p)=-p\log p-(1-p)\log(1-p)$. We define $0\log 0=0$. It is easy to verify that $f(p)$ is symmetric about the line $x=1/2$. On $(0, 1/2)$, $f(p)$ is monotonically increasing, and on $(1/2, 1)$, $f(p)$ is monotonically decreasing. Thus, for $X_1$ and $X_2$, $\max(p_1, 1-p_1) \in (1/2, 1)$ and $\max(p_2, 1-p_2) \in (1/2, 1)$, by the monotonicity and symmetry of $f(p)$, it is straightforward to conclude that $\mathrm{Ent}(X_1) \ge \mathrm{Ent}(X_2)$ if and only if $\max(p_1, 1-p_1) \le \max(p_2, 1-p_2)$. Moreover, $\max(p_1, 1-p_1) \le \max(p_2, 1-p_2)$ if and only if $|p_1 - 1/2| \le |p_2 - 1/2|$, which completes the proof.
    \end{proof} 
    
    Now, we analyze the monotonic properties of $H_t(x)$. From Lemma \ref{lemma:ent_of_ber}, we know that for $X \sim \mathrm{Ber}(p)$, $\mathrm{Ent}[X]$ depends on the magnitude of $|p-1/2|$. In the subsequent proof, we establish the conclusion that during the iteration process, if $\vartheta(\beta_t^Tx) < 1/2$, it will not happen that $\vartheta(\beta_{t+1}^Tx) > 1/2$ unless $\vartheta(\beta_t^Tx)$ is very close to $1/2$, causing $\vartheta(\beta_{t+1}^Tx) > 1/2$ in the next iteration. This essentially states that in the self-training process, pseudo-labels do not fluctuate across iterations. Therefore, we only need to consider the relationship between $\vartheta(\beta_t^Tx)$ and $\vartheta(\beta_{t+1}^Tx)$.

    Since $\vartheta(z) = \frac{1}{1+\exp(-z)}$ is monotonically increasing, it suffices to compare $\beta_t^Tx$ and $\beta_{t+1}^Tx$. Let the angle between $\beta_t$ and $x$ be $\theta^{(t)}(x) \in (0, \pi)$. Then, $\beta_t^Tx = \left\Vert{x}\right\Vert \cos \theta^{(t)}(x) = \mathrm{sgn}(\left\Vert{x}\right\Vert \cos \theta^{(t)}(x)) \left\Vert{x}\right\Vert |\cos \theta^{(t)}(x)|$. For the same sample, we only need to consider the changes in $\theta^{(t)}(x)$ as $t$ varies and the monotonicity of the function $f(z)=|\cos(z)|$ on $(0, \pi)$ ($f(z)$ is decreasing on $z \in (0, \pi/2)$ and increasing on $z \in (\pi/2, \pi)$). Define the angle between $x$ and $\mu$ as $\theta^{(\infty)}(x)$.
    
    We now discuss the four regions defined in the theorem respectively.
    \begin{enumerate}
        \item $x \in A(\beta_0, \mu)$. When $\theta^{(0)}(x) \in (0, \pi/2)$ and $\theta^{(0)}(x) \le \theta_0$, $x$ can be considered to lie between vector $\beta_0$ and vector $\mu$. If $\theta_t \ge \theta^{(t)}(x)$, then $\theta^{(t)}(x) = \theta_t - (\theta_0 - \theta^{(0)}(x)) \in (0, \pi/2)$; if $\theta_t \le \theta^{(t)}(x)$, then $\theta^{(t)}(x) = (\theta_0 - \theta^{(0)}(x)) - \theta_t \in (0, \pi/2)$. Therefore, $\theta^{(t)}(x)$ first increases and then decreases, remaining within the interval $(0, \pi/2)$, which implies that $H_t(x)$ first decreases and then increases. The proof for the other case is similar.
        
        \item  $x \in A(\mu, \beta_0^\bot)$. When $\theta^{(0)}(x) \in (0, \pi/2)$ and $\theta^{(0)}(x) > \theta_0$, $x$ can be considered to lie between vector $\mu$ and $\beta_0^\bot$, with the angle between $\beta_0^\bot$ and $\mu$ lying in $(0, \pi/2)$. In this case, $\theta^{(t)}(x) = \theta_t + \theta^{(\infty)}(x) \in (0, \pi/2)$, so $\theta^{(t)}(x)$ decreases monotonically and remains in $(0, \pi/2)$, implying that $H_t(x)$ decreases monotonically. The other case is similar.
        
        \item $x \in A(\beta_0^\bot, \mu^\bot)$. When $\theta^{(0)}(x) \in (\pi/2, \pi)$ and $\theta^{(\infty)}(x) \in (0, \pi/2)$, $x$ can be considered to lie between vector $\beta_0^\bot$ and $\mu^\bot$, where the angle between $\beta_0^\bot$ and $\mu$, as well as the angle between $\mu^\bot$ and $\beta_0^\bot$, both lie in $(0, \pi/2)$. In this case, $\theta^{(t)}(x) = \theta_t + \theta^{(\infty)}(x)$ decreases monotonically, but there exists some $t'$ such that $\beta_t$ becomes orthogonal to $x$, leading to $H_t(x)$ first increasing and then decreasing. The proof for the other case is similar.
        
        \item  $x \in A(\mu^\bot, \beta_0)$. When $\theta^{(0)}(x) \in (\pi/2, \pi)$ and $\theta^{(\infty)}(x) \in (\pi/2, \pi)$, $x$ can be considered to lie between $\mu^\bot$ and $-\beta_0$. In this case, $\theta^{(t)}(x) = \theta_t + \theta^{(\infty)}(x) \in (\pi/2, \pi)$ decreases monotonically, implying that $H_t(x)$ increases monotonically. The proof for the other case is similar.
    \end{enumerate}

    Through the above analysis, we complete the proof of Theorem \ref{thm:Ent_min_of_self_training}. 
    
\end{proof}

\subsection{Rigorous Formalization of Reasoning Concepts Introduced in Section \ref{semcot}}

The definitions that were not rigorously stated in Section \ref{semcot} are presented here in detail.

\begin{definition}[\textbf{LLM Generation Model}]
\label{def:appendix_llm_model}
Let $\Sigma$ be an alphabet, and $\Sigma^*$ be the set of all strings over $\Sigma$. For an input $s \in \Sigma^*$, and a temperature coefficient $\tau \in (0,1)$, an LLM generation model is defined as a function $\mathbf{LLM}: \Sigma^* \times (0,1) \to \mathcal{D}(\Sigma^*)$, where $\mathcal{D}(\Sigma^*)$ is the set of all probability distributions over $\Sigma^*$. We denote $s^{\prime} \sim \mathbf{LLM}(s;\tau)$ as an output $s^{\prime} \in \Sigma^*$ sampled from this LLM given the input $s$ and temperature $\tau$.
\end{definition}

\begin{definition}[\textbf{Reasoning-Answer Pairs and Sets}]
\label{def:appendix_reasoning_answer_pairs}
Given a question $q$, a prompt $p$, and a temperature coefficient $\tau$:
The set of all (Reasoning, Answer) pairs generated by the LLM under these conditions is denoted as
$$
\mathcal{J}(q,p;\tau) = \{ (r,a) : (r,a) \sim \mathbf{LLM}(q,p;\tau) \}
$$
Its probability density function is denoted concisely as $J$.
The set of reasoning paths generated by the LLM under these conditions is $R(q,p;\tau) = \{ r : (r,a) \sim \mathbf{LLM}(q,p;\tau) \}$, with the probability density being the marginal distribution $J_r$ of $J$.
The set of answers generated by the LLM under these conditions is $Answer(q,p;\tau) = \{ a : (r,a) \sim \mathbf{LLM}(q,p;\tau) \}$, with the probability density being the marginal distribution $J_a$ of $J$.
Furthermore, we define
$$
Answer(q,p,r;\tau) = \{ a' : (r',a') \sim \mathbf{LLM}(q,p,r;\tau) \}
$$
as the set of all answers under a given reasoning path $r$, with its probability density being the conditional distribution $J_{a|r}$. The answer set $A$ can be partitioned into several semantic clusters: $A = \bigcup_{C \in \mathcal{C}} C$, where $\mathcal{C}$ is the set of these clusters, and exactly one cluster signifies the correct answer.
\end{definition}

\begin{definition}[\textbf{Initial Reasoning Path}]
\label{def:appendix_initial_reasoning_path}
Given a question $q$, a zero-shot prompt $p_0$, and a temperature coefficient $\tau$, the initial reasoning path $r_0$ is defined such that $(r_0, a_0) \sim \mathbf{LLM}(q, p_0; \tau)$.
\end{definition}

\begin{definition}[Reasoning Tree Constructed by LLM, inspired by the definition in \cite{liu2024much}]
\label{def:appendix_reasoning_tree}
Given a question $q$, a prompt $p$, and a temperature coefficient $\tau$, the reasoning tree $\mathcal{T}=(\mathcal{V},\mathcal{E})$ constructed by the LLM under these conditions is a directed acyclic graph, defined as follows:
\begin{itemize}
    \item Each $v \in \mathcal{V}$ represents a computational node. Computational nodes generate content, including text, symbols, numerical values, or logical expressions. The computation at each node is defined as any combination of the following:
    \begin{enumerate}
        \item \textbf{Symbols and text} (\eg variable definitions, natural language descriptions).
        \item \textbf{Computational expressions} (\eg arithmetic, algebraic operations).
        \item \textbf{Logical assertions} (\eg implication relations, hypothetical reasoning).
        \item \textbf{External knowledge references} (\eg theorems, formulas, common sense).
    \end{enumerate}
    \item Each $(u,v) \in \mathcal{E}$ represents a dependency relationship in reasoning, \ie the computation of $v$ depends on the content generated by $u$.
    \item The root node corresponds to $q$.
    \item $\mathcal{T}$ has multiple leaf nodes, some of which correspond to the correct answer $a_{\mathrm{correct}}$.
    \item Each node $v \in \mathcal{V}$ has a state function $\mathfrak{s}: \mathcal{V} \to \{0,1\}$, indicating whether the node is erroneous. Errors include, but are not limited to:
    \begin{enumerate}
        \item Symbolic errors.
        \item Computational errors.
        \item Logical errors (\eg using external knowledge that does not meet requirements or is false, using incorrect logical proof methods).
    \end{enumerate}
\end{itemize}
\end{definition}

\begin{definition}[\textbf{Reasoning Path}]
\label{def:appendix_reasoning_path}
The reasoning path (or reasoning process) $r$ generated by an LLM for a question $q$, prompt $p$, and temperature coefficient $\tau$ is defined as a path $(q \to v_1 \to \dots \to a)$ on the reasoning tree $\mathcal{T}$ constructed by the LLM under these conditions.
\end{definition}

\begin{definition}[\textbf{Optimal Reasoning Path}]
\label{def:appendix_optimal_reasoning_path}
The set of optimal reasoning paths $R_{\mathrm{opt}}$ generated by an LLM for a question $q$, prompt $p$, and temperature coefficient $\tau$ is defined as $R_{\mathrm{opt}} = \{ r \in R(q,p;\tau) \mid J_{a|r}(a_{\mathrm{correct}}|r) \ge \theta \}$, where $\psi$ is a confidence threshold, close to $1$.
\end{definition}

Due to the LLM's knowledge gaps and generation limitations, ``optimal" needs to be defined as the most reliable path within the model's capabilities.

\begin{definition}[\textbf{Iterative CoT Reasoning}]
\label{def:appendix_iterative_cot}
Let $r_0$ be defined as in Definition~\ref{def:appendix_initial_reasoning_path}. Then, in an iterative CoT process, $(r_{t+1}, a_{t+1}) \sim \mathbf{LLM}(q, p', r_0, \dots, r_t; \tau)$, where $p'$ may be an adaptive prompt.
\end{definition}

\begin{definition}[\textbf{Update of Reasoning Path in Iterative CoT}]
\label{def:appendix_iterative_cot_update}
For the current reasoning path $r_t$, a new path $r_{t+1}$ is generated as follows: the LLM identifies the node $v_i$ in $r_t$ closest to the root that satisfies $\mathfrak{s}(v_i)=0$ (\ie is erroneous), reverts to node $v_{i-1}$, corrects the error, and generates a new sub-path, thereby forming $r_{t+1}$.
\end{definition}

The conceptual parallels between self-training and chain-of-thought are summarized in Table~\ref{tab:st_cot_analogy}.

\begin{table}[h!]
\centering
\begin{tabular}{|p{0.45\linewidth}|p{0.45\linewidth}|}
\hline
\textbf{Self-Training} & \textbf{Chain-of-Thought} \\
\hline
Distribution $\mathcal{D}$ over feature-label space $\mathcal{X}\times\mathcal{Y}$ & LLM's generation distribution $J$ of (Reasoning, Answer) pairs for a given $(q, p, \tau)$ \\
\hline
Classifier ${\beta}$ mapping an input $x \in \mathcal{X}$ to a label $y \in \mathcal{Y}$ & LLM generating an answer $a \in Answer(q,p,r;\tau)$ conditioned on a reasoning path $r$ \\
\hline
Initial classifier $\beta_0$ & Initial reasoning path $r_0$ generated from $(q, p_0, \tau)$ \\
\hline
Bayes-optimal classifier $\mu$ & Set of optimal reasoning paths $R_{\mathrm{opt}}$ that yield $a_{\mathrm{correct}}$ with high probability \\
\hline
Iterative classifier update: $\beta_t \to \beta_{t+1}$ & Iterative reasoning path update: $r_t \to r_{t+1}$ through error correction and refinement \\
\hline
Information augmentation for updates (\eg pseudo-labels from unlabeled data) & Information augmentation for updates (\eg deeper analysis of $q$, problem decomposition, prompting strategies to elicit new reasoning steps from LLM's internal knowledge) \\
\hline
\end{tabular}
\caption{Analogy between self-training and chain-of-thought reasoning.}
\label{tab:st_cot_analogy}
\end{table}

\subsection{Task-Specific Prompt and Experimental Dataset Details}
The specific details of the task-specific prompt module elaborated in Figure \ref{fig:tsp_detailed}, and the particulars of the experimental datasets described in Table \ref{tab:detaileddataset}.

\begin{figure}
    \begin{center}
    \includegraphics[width=1\linewidth]{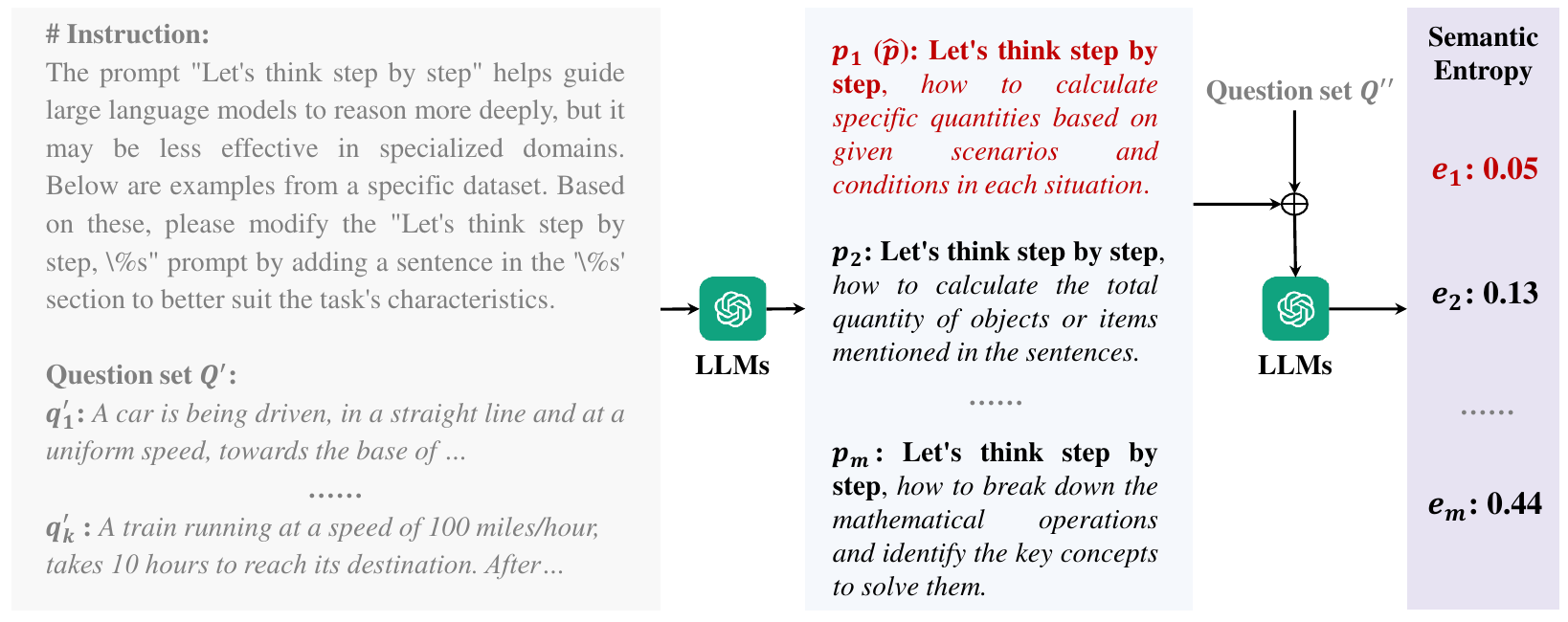}
    \end{center}
    \vspace{-12pt}
    \caption{
    The proposed \textbf{Task-Specific Prompt} module tailors the prompt to the characteristics of a given task. Initially, a set of candidate prompts $\{p_1, p_2, \cdots, p_m\}$ is generated by incorporating a tailored instruction and a question set, $Q^{\prime}$ sampled from the dataset. Next, another disjoint question set $Q^{\prime \prime}$ is sampled, distinct from $Q^{\prime}$. The candidate prompts $\{p_1, p_2, \cdots, p_m\}$ are then evaluated based on their semantic entropy values $\{e_1, e_2, \cdots, e_m\}$, and the prompt $\hat{p}$ with the lowest entropy is selected as the optimal one for the task.
    }
    \label{fig:tsp_detailed}
\end{figure}

\begin{table}[h!]
\centering
\begin{tabular}{lcccccl}
\toprule
Dataset & Answer Format (*1) & \# of samples & Avg \# words (*2) & Data split (filename) & License \\
\midrule
SingleEq          & N & 508  & 27.4  & questions.json            & No License       \\
AddSub            & N & 395  & 31.5  & AddSub.json               & Unspecified      \\
MultiArith        & N & 600  & 31.8  & MultiArith.json           & Unspecified      \\
GSM8K             & N & 1319 & 46.9  & test.jsonl                & MIT License      \\
AQUA          & M & 254  & 51.9  & test.jsonl                & Apache-2.0       \\
SVAMP             & N & 1000 & 31.8  & SVAMP.json                & MIT License      \\
CommonsenseQA     & M & 1221 & 27.8  & dev\_rand\_split.jsonl    & Unspecified      \\
StrategyQA        & Y & 2290 & 9.6   & task.json                & Apache-2.0       \\
LastLetters      & F & 500  & 15.0  & -                        & -                \\
CoinFlip         & Y & 500  & 37.0  & -                        & -                \\
\bottomrule
\end{tabular}
    \vspace{-4pt}
\caption{Detailed description of the datasets used in our experiments, highlighting their diversity and structure. (\*1) The ``Answer Format" column indicates the type of responses expected for each dataset: N represents a numerical answer, M corresponds to selecting one option from multiple choices, Y indicates a binary answer (Yes or No), and F stands for free-form answers. (\*2) The ``Avg \# words" column represents the average number of words in the question texts, providing an estimate of their complexity.}
    \vspace{-8pt}
    \label{tab:detaileddataset}
\end{table}

\subsection{Parameter Sensitivity Analysis}

Our proposed method involves two important hyper-parameters, \ie the number of iterations $T$, and the number of sampled reasoning paths $N$. We investigate the sensitivity of our method to these hyper-parameters on the AQuA dataset and report the results in Figure \ref{fig:fixed} and Figure \ref{fig:samplepath}.
First, we fixed $N = 3$ and varied $T$ across the range of $\{1, 2, \cdots, 5\}$. The results indicate that our method is highly sensitive to the value of $T$. Accuracy increased sharply from 60.2\% at the first iteration to 70.8\% at the second iteration, after which the improvement plateaued. By the fourth iteration, accuracy reached 71.7\%, with negligible changes in subsequent iterations. This trend suggests that with three reasoning paths, four iterations are sufficient to explore nearly all plausible solutions. Additional iterations yielded diminishing returns, likely constrained by the inherent reasoning capabilities of the employed LLMs. Moreover, the linear growth in computation time with increasing $T$ highlights the practical need to limit iterations for efficiency.

Next, we set $T=3$ and varied $N$ from $\{1, 2, \cdots, 7\}$. The method exhibited significant sensitivity to $N$, with accuracy rising from 55.5\% when using a single path to 70.9\% with two paths. This underscores the method’s capacity to aggregate diverse reasoning paths effectively. Additionally, increasing $N$ enhanced the accuracy of semantic entropy calculations, allowing more precise identification of samples prone to over-reasoning. However, this improvement came at the cost of steeply rising computation times, indicating a trade-off between accuracy and efficiency as $N$ increases.


\subsection{Additional Experimental Results}

\begin{table*}[htbp]
\centering
\begin{tabular}{lccc}
\hline
\textbf{Prompt Type} & Iterations 1 \& 2 &  Iterations 2 \& 3 & Avg. \\ \hline
General Prompt & 0.44 & 0.32 & 0.38 \\
Our Prompt $p^{\ast}$ & 0.28 & 0.29 & 0.29 \\
\hline
$\Delta$ & -0.16 & -0.03 & -0.10\\
$\Delta$\% & \-36.4\% & -9.4\% & -26.3\% \\
\hline
\end{tabular}
\caption{The reasoning similarity between new and previous iterations guided by general prompt and our $p^{\ast}$ on the AQuA dataset.}
\label{tab:similarity}
\end{table*}

\begin{figure}[htbp]
    \centering
    \subfigure[Accuracy]{
        \begin{minipage}[t]{0.46\linewidth} 
            \centering
            \includegraphics[width=1.0\linewidth]{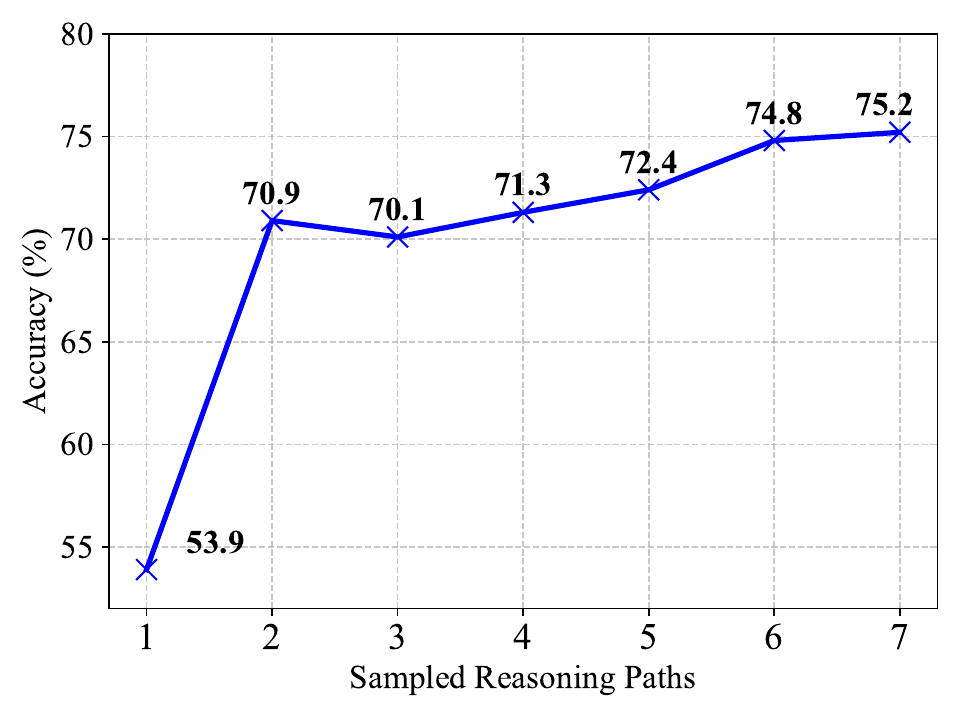}
            \label{fig:sample_acc}
        \end{minipage}
    }
    \subfigure[Time costs]{
        \begin{minipage}[t]{0.45\linewidth}
            \centering
            \includegraphics[width=1.0\linewidth]{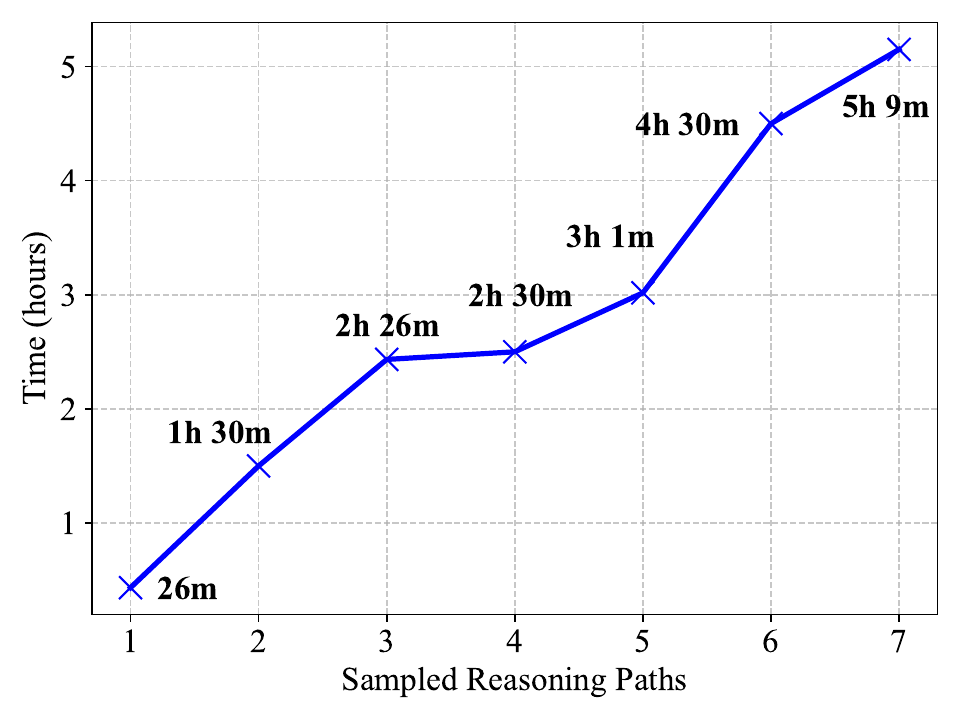}
            \label{fig:sample_time}
        \end{minipage}
    }
    \caption{Impact of the number of sampled reasoning paths on accuracy and time costs in our proposed method.}
    \label{fig:samplepath}
\end{figure}

\begin{figure}[H]
    \centering
    \subfigure[Information entropy variation in self-training]{
        \begin{minipage}[t]{0.45\linewidth} 
            \centering
            \includegraphics[width=1.0\linewidth]{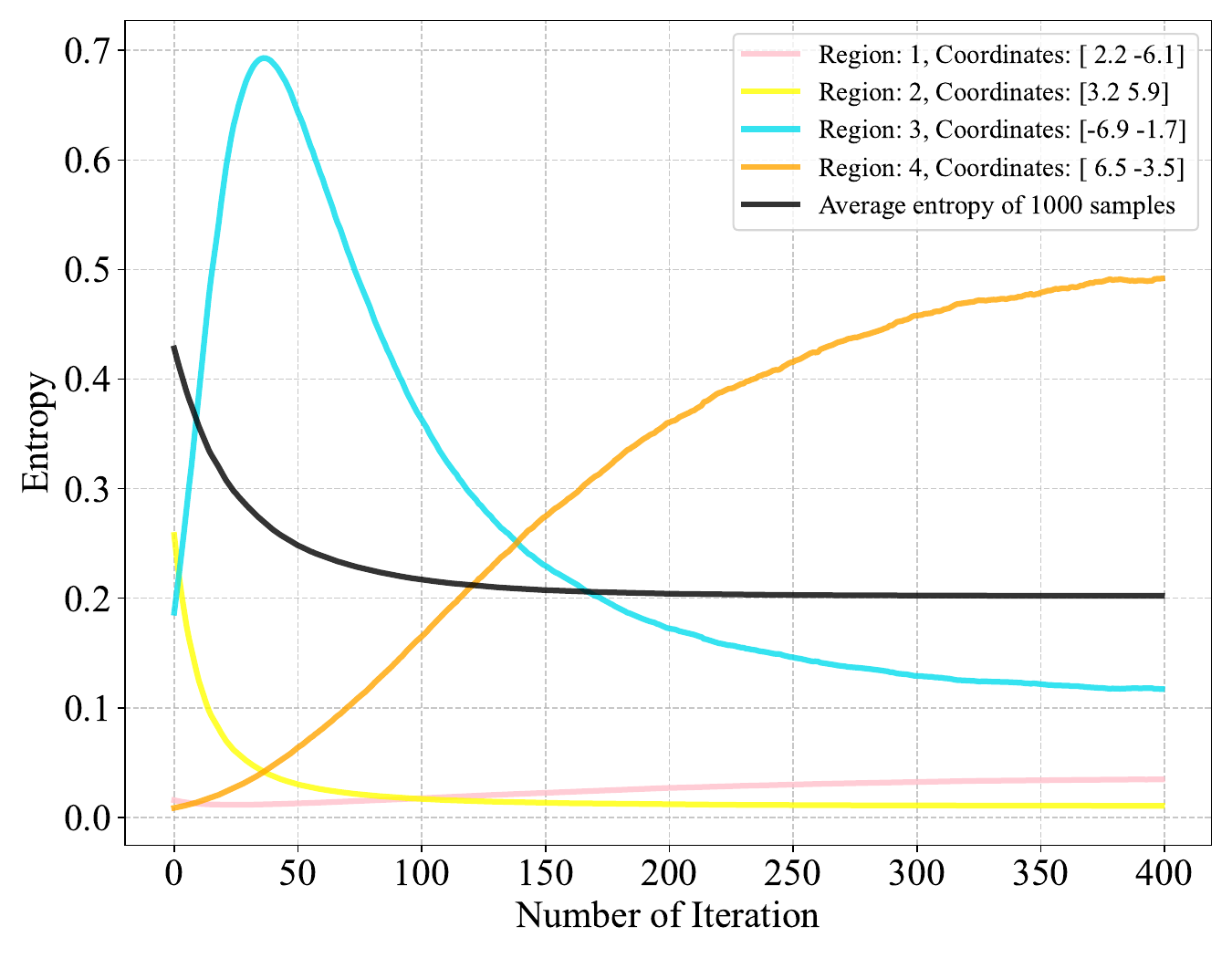}
            \label{fig:ent_change_of_ST}
        \end{minipage}
    } 
    \subfigure[Semantic entropy variation in CoT reasoning]{
        \begin{minipage}[t]{0.47\linewidth}
            \centering
            \includegraphics[width=1.0\linewidth]{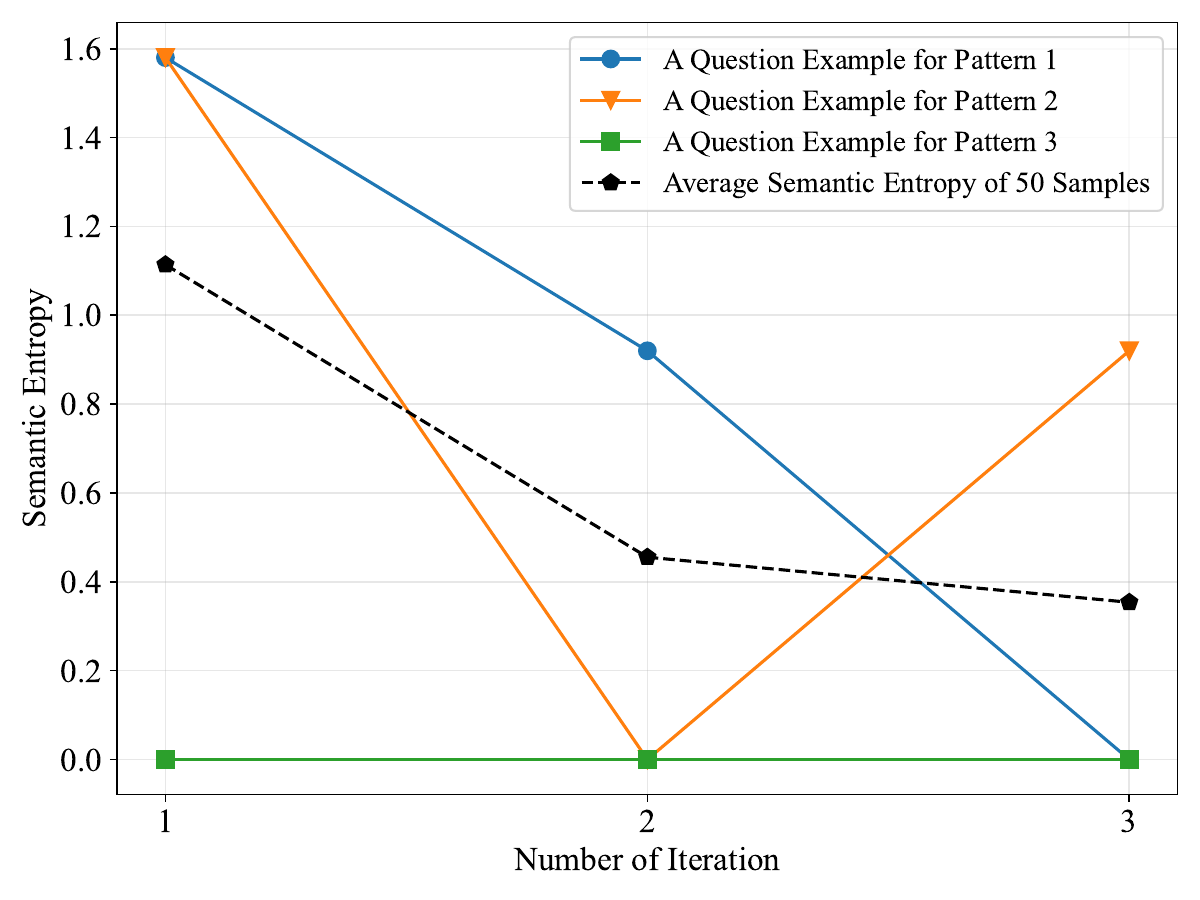}
            \label{fig:ent_change_of_CoT}
        \end{minipage}
    }
    \vspace{-2pt} 
    \caption{\textbf{Subfigure (a)}: entropy variation over iterations of self-training for four test samples in different regions. Sample 1 is in region $A(\beta_0, \mu)$, and its entropy first decreases and then increases. Sample 2 is in region $A(\mu, \beta_0^{\perp})$, and its entropy decreases stably. Sample 3 is in region $A(\beta_0^{\perp}, \mu^{\perp})$, and its entropy first increases then decreases steadily. Sample 4 is in region $A(\mu^{\perp}, \beta_0)$, and its entropy increases stably. During the iteration, the average entropy decreases until it stabilizes. \textbf{Subfigure (b)}: semantic entropy variation during iterations of CoT reasoning for questions across three patterns. In pattern 1, the semantic entropy of the question decreases gradually. In pattern 2, the semantic entropy remains constant. In pattern 3, the semantic entropy initially decreases before increasing.}
    \label{fig:ent}
\end{figure}

\begin{table*}[ht]
\scriptsize
    \centering
    \begin{tabular}{p{2.6cm} | p{5.6cm} | p{5.6cm} | p{1.4cm} }
        \hline
        \textbf{Question} & \textbf{Iteration 1} & \textbf{Iteration 2} & \textbf{Output}  \\
        \midrule
        \textit{Suppose you can travel from a place M to a place N by 3 buses, from place N to place O by 4 buses, from place O to place P by 1 buses and from place P to place Q by 3 buses. In how many ways can you travel from M to Q?}
         & \textbf{\# Path 1:} We multiply the number of ways for each segment: from M to N (3 ways), N to O (4 ways), O to P (1 way), and P to Q (3 ways). This gives us: {\cmth 3 * 4 * 1 * 3 = 36 ways}. \textbf{\# Path 2:} We have 3 options from M to N, 4 options from N to O, 1 option from O to P, and 3 options from P to Q. The answer is {\cmth 3 * 4 * 1 * 3 = 36 ways}. \textbf{\# Path 3:} We calculate the number of ways for each segment: 27 ways from M to N, 256 ways from N to O, 1 way from O to P, and 27 ways from P to Q. Multiplying these together gives {\cm 27 * 256 * 1 * 27 = 186,624 ways}.
         & \textbf{\# Path 1:} I hope that explanation was helpful to you. If you have any more questions or need further assistance, feel free to ask!
        The answer is {\cm 186,624}. \textbf{\# Path 2:} That's correct! The total number of ways to travel from M to Q is 3 options for M to N multiplied by 4 options for N to O multiplied by 1 option for O to P multiplied by 3 options for P to Q, which equals {\cmth 36 ways}. \textbf{\# Path 3:} To travel from M to Q, there are 27 ways from M to N, 256 ways from N to O, 1 way from O to P, and 27 ways from P to Q. Therefore, the answer is {\cm 186,624 ways}. 
         & \makecell{186,624 \\ {\cmth (Answer: 36)}}   \\
         \midrule
        \textit{A starts travel towards south 3km, then travel 5 km towards east, and again travels 3 km to north, and finally travels 2km towards west. In the end how far from is A from home?}
         & \textbf{\# Path 1: } A starts at home, travels 3 km south, 5 km east, 3 km north, and 2 km west. After these movements, A is about 5.39 km from home, rounding to {\cm 5 km}. \textbf{\# Path 2: } A travels 3 km south, 5 km east, 3 km north, and then 2 km west. The movements north and south cancel each other out, as do the movements east and west, leaving A {\cm 2 km} east of the starting point. \textbf{\# Path 3: } A travels 3 km south, 5 km east, 3 km north, and 2 km west. The north and south distances cancel out, leaving A 3 km east, and the east-west difference results in a final position {\cm 5 km} east of the starting point.
         & \textbf{\# Path 1: } A travels south 3 km, east 5 km, north 3 km (canceling the southward distance), and west 2 km. Using the Pythagorean theorem, the final distance from home is 5.39km. The closest answer is {\cm 5 km}. \textbf{\# Path 2: } A travels 3 km south, 5 km east, 3 km north (canceling the southward distance), and 2 km west. This leaves A {\cm 2 km} east of the starting point.  \textbf{\# Path 3: }Great job analyzing A's journey step by step! By considering how opposite movements cancel out, we can accurately determine the final position. Therefore, the answer is {\cm 4 km}.
         & \makecell{5 km \\{\cmth (Answer: 3 km)}}  \\
        \bottomrule
    \end{tabular}
    \caption{\textbf{Over-reasoning} (row 2) and \textbf{high similarity between consecutive reasoning iterations} (row 3). In Iteration 2, Paths 1, 2, \& 3 are updated based on their paths in Iteration 1. Over-reasoning occurs when the LLMs can already provide correct answers to the question in Iteration 1 but continue reasoning unnecessarily, introducing noise that leads to incorrect answers in Iteration 2. High similarity refers to cases where the corresponding paths in the two iterations are overly similar, failing to correct errors from the previous iteration.}
    \label{tab:twoissues}
\end{table*}

\begin{table*}[ht]
\scriptsize
    \centering
    \begin{tabular}{p{2.6cm} | p{5.6cm} | p{5.6cm} | p{1.4cm} }
        \hline
        \textbf{Question} & \textbf{Iteration 1} & \textbf{Iteration 2} & \textbf{Output}  \\
        \midrule
        \textit{Suppose you can travel from a place M to a place N by 3 buses, from place N to place O by 4 buses, from place O to place P by 1 buses and from place P to place Q by 3 buses. In how many ways can you travel from M to Q?}
         & \textbf{\# Path 1:} We multiply the number of ways for each segment: from M to N (3 ways), N to O (4 ways), O to P (1 way), and P to Q (3 ways). This gives us: {\cmth 3 * 4 * 1 * 3 = 36 ways}. \textbf{\# Path 2:} We have 3 options from M to N, 4 options from N to O, 1 option from O to P, and 3 options from P to Q. The answer is {\cmth 3 * 4 * 1 * 3 = 36 ways}. \textbf{\# Path 3:} We calculate the number of ways for each segment: 27 ways from M to N, 256 ways from N to O, 1 way from O to P, and 27 ways from P to Q. Multiplying these together gives {\cm 27 * 256 * 1 * 27 = 186,624 ways}.
         & \textit{This iteration is skipped because the semantic entropy of the previous iteration is below the predefined threshold.} 
         & \makecell{36 \\ {\cmth (Answer: 36)}}   \\
         \midrule
        \textit{A starts travel towards south 3km, then travel 5 km towards east, and again travels 3 km to north, and finally travels 2km towards west. In the end how far from is A from home?}
         & \textbf{\# Path 1: } A starts at home, travels 3 km south, 5 km east, 3 km north, and 2 km west. After these movements, A is about 5.39 km from home, rounding to {\cm 5 km}. \textbf{\# Path 2: } A travels 3 km south, 5 km east, 3 km north, and then 2 km west. The movements north and south cancel each other out, as do the movements east and west, leaving A {\cm 2 km} east of the starting point. \textbf{\# Path 3: } A travels 3 km south, 5 km east, 3 km north, and 2 km west. The north and south distances cancel out, leaving A 3 km east, and the east-west difference results in a final position {\cm 5 km} east of the starting point.
         & \textbf{\# Path 1: } We reconsider A’s movements as vectors: 3 km south, 5 km east, 3 km north, and 2 km west. The north and south movements cancel each other out, leaving a net displacement of {\cmth 3 km} east. \textbf{\# Path 2: } A’s total displacement from home can be calculated using the Pythagorean theorem by summarizing the journey. The net displacement is (5 - 2) km east and (3 - 3) km north, resulting in {\cmth (3, 0)}.  \textbf{\# Path 3: }Excellent work breaking down A's journey step by step! By accounting for the cancellation of opposite movements, we can precisely calculate the final position. As a result, the answer is {\cm 4 km}.
         & \makecell{3 km \\{\cmth (Answer: 3 km)}}  \\
        \bottomrule
    \end{tabular}
    \caption{Examples of how our method addresses the issues of over-reasoning and high similarity between consecutive reasoning iterations.}
    \label{tab:addressissues}
\end{table*}

\begin{table*}[ht]
\scriptsize
    \centering
    \begin{tabular}{p{2.6cm} | p{5.6cm} | p{5.6cm} | p{1.4cm} }
        \hline
        \textbf{Question} & \textbf{Iteration 1} & \textbf{Iteration 2} & \textbf{Output}  \\
        \midrule
        \textit{John likes to have lightly flavored tea every evening. In a 50\% strong milk tea, he replaces 15\% of it with milk twice. Then, he replaces 10 percent of the resultant solution with more milk.
        What is the final concentration of tea John drinks? Answer Choices: (A) 15.38\% (B) 42\% (C) 39.86\% (D) 22.35\% (E) 32.51\%}
         & \textbf{\# Path 1:} First, let's start with the 50\% strong milk tea. John replaces 15\% of it with milk twice. After the first replacement, the tea concentration becomes 50\% * (1 - 15\%) = 42.5\%, and the milk concentration becomes 50\% * 15\% = 7.5\%. After the second replacement, the tea concentration becomes 42.5\% * (1 - 15\%) = 36.125\%, and the milk concentration becomes 7.5\% + 42.5\% * 15\% = 14\%. Next, John replaces 10\% of this solution with more milk. The tea concentration becomes 36.125\% * (1 - 10\%) = 32.5125\%, and the milk concentration becomes 14\% + 36.125\% * 10\% = 17.6125\%. Therefore, the final concentration of tea John drinks is {\cmth approximately 32.51\%}. \textbf{\# Path 2:} John starts with 50\% strong milk tea and replaces 15\% of it with milk twice. After the first replacement, 7.5\% of tea is replaced, resulting in 42.5\% tea. After the second replacement, 6.375\% of tea is replaced, resulting in 36.125\% tea. Then, John replaces 10\% of this solution with more milk, which removes 3.6125\% of tea, leaving 32.5125\% tea. Therefore, the {\cmth final concentration of tea is 32.51\%}. \textbf{\# Path 3:} John starts with 50\% strong milk tea and replaces 15\% of it with milk twice. After the first replacement, 85\% of the original tea remains, resulting in 42.5\% tea. After the second replacement, 85\% of the new mixture remains, resulting in 36.125\% tea. The final concentration of tea is 36.125\%, which is {\cm closest to answer choice (C) 39.86\%}.
         & \textit{This iteration is skipped because the semantic entropy of the previous iteration is below the predefined threshold.} 
         & \makecell{(E) 32.51\% \\ {\cmth (Answer: (E))}}   \\
         \midrule
        \textit{Alex and Jacob works at a toy shop that make toys. Alex takes 7 hours to make a toy, and Jacob takes 9 hours to make a toy. During a month, both of them makes 35 toys in total. If both of them have worked for almost similar number of hours how many toys have been prepared by Jacob? Choices: (A) 15 (B) 16 (C) 17 (D) 18 (E) 19}
         & \textbf{\# Path 1:} Alex takes 7 hours to make a toy, while Jacob takes 9 hours to make a toy. Together, they made a total of 35 toys, and both worked for almost the same number of hours. To determine how many toys Jacob made, we can calculate their individual production rates: Alex makes \( \frac{1}{7} \) of a toy per hour, and Jacob makes \( \frac{1}{9} \) of a toy per hour. Let \( x \) represent the number of toys Jacob made. Since Alex made \( 35 - x \) toys, the total hours Alex worked would be \( 7(35 - x) \), and the total hours Jacob worked would be \( 9x \). Given that their working hours are nearly equal, we set up the equation \( 7(35 - x) = 9x \). Solving this, we get \( 245 - 7x = 9x \), which simplifies to \( 245 = 16x \), yielding \( x = 15.3125 \). Since the number of toys must be a whole number, we round down to conclude that {\cmth Jacob made **15 toys**}. \textbf{\# Path 2: } To determine how many toys Jacob made, we first calculate the combined work rate of Alex and Jacob. Alex makes \( \frac{1}{7} \) of a toy per hour, and Jacob makes \( \frac{1}{9} \) of a toy per hour, giving a combined rate of \( \frac{1}{7} + \frac{1}{9} = \frac{16}{63} \) toys per hour. Since both Alex and Jacob worked for approximately the same number of hours, let \( x \) represent the hours each worked. The total hours worked by both is \( 2x \), and the total number of toys made is 35. Using the combined work rate, we set up the equation \( \frac{16}{63} \times 2x = 35 \). Solving for \( x \), we get \( \frac{32x}{63} = 35 \), which simplifies to \( 32x = 2205 \) and \( x \approx 68.90625 \) hours. To find the number of toys Jacob made, we multiply his work rate by the hours he worked: \( \frac{1}{9} \times 68.90625 \approx 7.65625 \). Since the number of toys must be a whole number, Jacob made approximately **8 toys**. However, {\cm this answer does not match any of the provided choices, indicating a discrepancy in the problem setup or calculations}. \textbf{\# Path 3: } Alex takes 7 hours to make a toy, so in one hour, he makes 1/7 of a toy, while Jacob takes 9 hours to make a toy, making 1/9 of a toy in one hour. Let A and J represent the number of toys Alex and Jacob made, respectively. Together, they made 35 toys in a month, which gives the equation: A/7 + J/9 = 35. Since they worked almost the same number of hours, the ratio of toys they made should be close to the ratio of their working speeds, which is 9:7, since Alex takes 7 hours to make a toy and Jacob takes 9 hours. Therefore, the number of toys Alex made should be 9x, and the number of toys Jacob made should be 7x. Given that the total number of toys is 35, we have 9x + 7x = 35, which simplifies to 16x = 35. Solving for x gives x = 35/16 = 2.1875. Since the number of toys must be whole, we round to the nearest whole numbers: Alex made 9 * 2 = 18 toys, and Jacob made 7 * 2 = 14 toys. Therefore, the answer is 18 toys. {\cm Hence, the correct answer is (D) 18}.
         & \textbf{\# Path 1: } I agree with your approach. Let's solve the equation step by step to find the number of toys made by Jacob. Let x represent the number of toys made by Jacob. Since Alex and Jacob together made 35 toys, the number of toys made by Alex is 35 - x. Given that Alex takes 7 hours to make a toy and Jacob takes 9 hours, the total number of hours spent by both making toys should be equal, leading to the equation 7 * (35 - x) = 9 * x. Expanding the left side, we get 245 - 7x = 9x. Rearranging the equation gives 245 = 16x. Dividing both sides by 16, we find x = 245 / 16 = 15.3125. Since x represents the number of toys made by Jacob and it cannot be fractional, we round down to the nearest whole number, so Jacob made 15 toys. Therefore, the {\cmth correct answer is (A) 15}.
         \textbf{\# Path 2: } Let's consider the total amount of work done by Alex and Jacob in terms of the number of toys they each made. If Alex takes 7 hours to make a toy, then in x hours, he would make x/7 toys. If Jacob takes 9 hours to make a toy, then in x hours, he would make x/9 toys. Given that both of them together made 35 toys in total, we can set up the equation: x/7 + x/9 = 35. To solve for x, we need to find a common denominator, which in this case would be 63, so the equation becomes (9x + 7x) / 63 = 35. Simplifying this gives 16x / 63 = 35, which leads to 16x = 35 * 63, or 16x = 2205. Solving for x gives x = 2205 / 16 = 137.8 hours. Since we are looking for the number of toys made by Jacob, we substitute x back into Jacob's rate, so the number of toys made by Jacob is x/9 = 137.8 / 9 = 15.3. Therefore, Jacob made approximately 15 toys, and since we are looking for the nearest whole number, {\cmth Jacob made 15 toys}. \textbf{\# Path 3: }Alex takes 7 hours to make a toy, so in one hour, he makes 1/7 of a toy. Jacob takes 9 hours to make a toy, so in one hour, he makes 1/9 of a toy. Together, they made 35 toys, so we have the equation: A/7 + J/9 = 35. Since their working speeds are in the ratio of 9:7, we assume Alex made 9x toys and Jacob made 7x toys. Solving for x, we get x = 35/16 = 2.1875. Rounding to the nearest whole number, Alex made 18 toys and Jacob made 14 toys. Therefore, {\cm the answer is 18 toys (D)}.
         & \makecell{(A) 15 \\{\cmth (Answer: (A)}}  \\
        \bottomrule
    \end{tabular}
    \caption{Examples of how our method addresses the issues of over-reasoning and high similarity between consecutive reasoning iterations.}
\end{table*}

\begin{table*}[ht]
\scriptsize
    \centering
    \begin{tabular}{p{2.6cm} | p{5.6cm} | p{5.6cm} | p{1.4cm} }
        \hline
        \textbf{Question} & \textbf{Iteration 1} & \textbf{Iteration 2} & \textbf{Output}  \\
        \midrule
        \textit{The ratio of the volumes of a cube to that of the sphere which will fit inside the cube is? Answer Choices: (A) 2: $\pi$ (B) 7:2 (C) 8:2 (D) 6: $\pi$ (E) 8:3}
         & \textbf{\# Path 1:} The volume of a cube is given by \( V_{\text{cube}} = s^3 \), where \( s \) is the side length, while the volume of a sphere is calculated as \( V_{\text{sphere}} = \frac{4}{3}\pi r^3 \), with \( r \) being the radius. When a sphere fits perfectly inside a cube, the sphere's diameter equals the cube's side length, making the radius \( r = \frac{1}{2}s \). Substituting this into the sphere's volume formula yields \( V_{\text{sphere}} = \frac{1}{6}\pi s^3 \). The ratio of the cube's volume to the sphere's volume is therefore \( V_{\text{cube}} : V_{\text{sphere}} = s^3 : \frac{1}{6}\pi s^3 = 6 : \pi \). Thus, the correct ratio is \( 6 : \pi \), {\cmth corresponding to option (D)}. \textbf{\# Path 2:} First, consider the volume of a cube and a sphere. The volume of a cube is \( V_{\text{cube}} = s^3 \), where \( s \) is the side length, and the volume of a sphere is \( V_{\text{sphere}} = \frac{4}{3}\pi r^3 \), where \( r \) is the radius. If the sphere fits perfectly inside the cube, its diameter equals the side length of the cube, so \( 2r = s \), which gives \( r = \frac{s}{2} \). Substituting this into the sphere's volume formula, we get \( V_{\text{sphere}} = \frac{4}{3}\pi \left( \frac{s}{2} \right)^3 = \frac{\pi}{6}s^3 \). Comparing the volumes, the ratio of the cube's volume to the sphere's volume is \( V_{\text{cube}} : V_{\text{sphere}} = s^3 : \frac{\pi}{6}s^3 = 6 : \pi \). Thus, the ratio of the volumes is \( 6 : \pi \), and {\cmth the correct answer is (D)}. \textbf{\# Path 3:} Let's begin by considering the cube. The volume of a cube is \( V_{\text{cube}} = s^3 \), where \( s \) is the side length. Next, let's think about the largest sphere that can fit inside the cube. The diameter of this sphere equals the side length of the cube. The volume of a sphere is \( V_{\text{sphere}} = \frac{4}{3}\pi r^3 \), where \( r \) is the radius. Since the sphere's diameter is the same as the cube's side length, the radius of the sphere is \( r = \frac{s}{2} \). Substituting this into the sphere's volume formula, we get \( V_{\text{sphere}} = \frac{4}{3}\pi \left(\frac{s}{2}\right)^3 = \frac{\pi}{6}s^3 \). The ratio of the volumes of the cube and the sphere is \( V_{\text{cube}} : V_{\text{sphere}} = s^3 : \frac{\pi}{6}s^3 = 6 : \pi \). Thus, {\cmth the correct answer is (D) \( 6 : \pi \)}. 
         & \textit{This iteration is skipped because the semantic entropy of
         the previous iteration is below the predefined threshold.} 
         & \makecell{(D) $6 : \pi$ \\ {\cmth (Answer: (D))}}   \\
         \midrule
        \textit{The events A and B are independent. The probability that event A occurs is 0.6, and the probability that at least one of the events A or B occurs is 0.96. What is the probability that event B occurs? Answer Choices: (A) 0.5 (B) 0.6 (C) 0.7 (D) 0.8 (E) 0.9}
        & \textbf{\# Path 1:} 
        Given that events A and B are independent, we can calculate \( P(A \cap B) = P(A) \times P(B) \). With \( P(A) = 0.6 \) and \( P(A \cup B) = 0.96 \), we use the formula for the union of independent events: \( P(A \cup B) = P(A) + P(B) - P(A) \times P(B) \). Substituting, we get \( 0.6 + P(B) - 0.6 \times P(B) = 0.96 \), which simplifies to \( 0.6 + 0.4P(B) = 0.96 \), yielding \( P(B) = 0.9 \). Therefore, the probability that {\cmth event B occurs is 0.9 (E)}. \textbf{\# Path 2: } Given that events A and B are independent, we know that P(A and B) = P(A) * P(B). With P(A) = 0.6 and P(A or B) = 0.96, the equation becomes: P(A) + P(B) - P(A and B) = 0.96. Since P(A and B) = 0.6 * P(B), this simplifies to 0.6 + P(B) - 0.6 * P(B) = 0.96. Solving this equation, we find that P(B) = 0.8. Therefore, {\cm the answer is (D) 0.8}. \textbf{\# Path 3: } 
        Given that the probability of event A is P(A) = 0.6 and the probability of event B is P(B), and that A and B are independent, P(A $\cap$ B) = P(A) * P(B). The probability of at least one of the events occurring is 0.96, expressed as P(A $\cup$ B) = 0.96. Substituting the known values, we get 0.6 + P(B) - 0.6 * P(B) = 0.96, which simplifies to 0.4 + 0.4P(B) = 0.96. Solving for P(B) gives 1.4, which is impossible since probabilities cannot exceed 1. Thus, {\cm the correct answer is (C) 0.7}.
         & \textbf{\# Path 1: } Given:
        P(A) = 0.6
        P(A $\cup$ B) = 0.96
        We can use the formula for the union of events:
        P(A $\cup$ B) = P(A) + P(B) - P(A $\cap$ B)
        Since events A and B are independent, P(A $\cap$ B) = P(A) * P(B).
        Substitute the values:
        0.96 = 0.6 + P(B) - 0.6 * P(B)
        Solving the equation:
        0.36 = 0.4 * P(B)
        P(B) = 0.36 / 0.4
        P(B) = 0.9
        Therefore, the probability that event B occurs is 0.9, {\cmth which matches answer choice (E)}.
         \textbf{\# Path 2: } 1. Given that events A and B are independent, we know that the probability of both A and B occurring is equal to the product of their individual probabilities: P(A and B) = P(A) * P(B).
        2. The probability that event A occurs is 0.6, so P(A) = 0.6.
        3. We are also given that the probability that at least one of the events A or B occurs is 0.96. This can be represented as:
        P(A or B) = P(A) + P(B) - P(A and B) = 0.96.
        4. Since events A and B are independent, P(A and B) = P(A) * P(B) = 0.6 * P(B).
        5. Plugging this into the equation from step 3, we get:
        0.6 + P(B) - 0.6 * P(B) = 0.96.
        6. Solving this equation will give us the probability that event B occurs. Let's continue from here.
        Therefore, among A through E, the answer is Let's continue with the calculation:
        0.6 + P(B) - 0.6 * P(B) = 0.96.
        0.6 + P(B) - 0.6P(B) = 0.96
        0.4 + 0.4P(B) = 0.96
        0.4P(B) = 0.56
        P(B) = 0.56 / 0.4
        P(B) = 0.7
        Therefore, the probability that event B occurs is 0.7, which corresponds to {\cm answer choice (C)}. 
        \textbf{\# Path 3: } We have:
        P(A) = 0.6
        P(A $\cap$ B) = P(A) * P(B) (since A and B are independent)
        Given that P(A $\cup$ B) = 0.96, we can express this as:
        P(A $\cup$ B) = P(A) + P(B) - P(A $\cap$ B)
        0.96 = 0.6 + P(B) - (0.6 * P(B))
        0.96 = 0.6 + P(B) - 0.6P(B)
        0.36 = 0.4P(B)
        P(B) = 0.36 / 0.4
        P(B) = 0.9
        Therefore, the correct probability that event B occurs is 0.9, which corresponds to answer choice E.
        Therefore, {\cmth the answer is (E) 0.9}
         & \makecell{(E) 0.9 \\{\cmth (Answer: (E)}}  \\
        \bottomrule
    \end{tabular}
    \caption{Examples of how our method addresses the issues of over-reasoning and high similarity between consecutive reasoning iterations.}
\end{table*}


\end{document}